\documentclass{article} % For LaTeX2e
\usepackage{iclr2027_conference,times}

\usepackage{amsmath,amsfonts,bm}

\def\eqref#1{equation~\ref{#1}}
\def\1{\bm{1}}

\DeclareMathAlphabet{\mathsfit}{\encodingdefault}{\sfdefault}{m}{sl}
\SetMathAlphabet{\mathsfit}{bold}{\encodingdefault}{\sfdefault}{bx}{n}

\usepackage{mathtools}
\usepackage{amsthm}
\usepackage{algorithm} 
\usepackage{algpseudocode}
\usepackage{hyperref}
\usepackage{url}
\usepackage{xspace}
\usepackage{booktabs}
\usepackage{multirow}
\usepackage{graphicx}
\usepackage{xcolor}
\usepackage{wrapfig}
\usepackage{needspace}
\usepackage{capt-of}
\usepackage{subcaption}
\usepackage{enumitem}

\newtheorem{proposition}{Proposition}
\newcommand{\modelname}{S$^3$R\xspace}

\title{\modelname: Fast and Faithful RL Rollouts via Self-Sparse Speculative Decoding}

\author{Yuhang He, Hongyu Ge, Zihan Lin, Siyi Liu, Yongqi Zhang \\
 The Hong Kong University of Science and Technology (Guangzhou)\\
\texttt{yhe534@connect.hkust-gz.edu.cn} \\
}

\iclrfinalcopy % Uncomment for camera-ready version, but NOT for submission.
\begin{document}

\maketitle

\begin{abstract}
Fast rollout generation for long-context reinforcement learning with verifiable rewards (RLVR) requires reducing KV-cache traffic without changing the response groups used for policy optimization. 
In our profiled RLVR workloads, attention accounts for 72--76\% of autoregressive decoding kernel time.
Sparse attention reduces these reads, but sampling directly from a sparse policy changes the response groups used for learning. 
For group-relative optimization, we show that even exact response-wise reweighting does not generally recover the expected dense-policy update because the advantage of each response depends on the rewards of other group members. 
We propose Self-Sparse Speculative Rollouts (\modelname).
It uses sparse attention to propose tokens, then generates responses through full-attention verification and exact speculative sampling.
Drafting and verification share the same policy parameter, eliminating the need to train or update a separate drafter. The resulting response groups follow the dense-policy distribution.
Under \modelname, we characterize speedup conditions and the costs of drafting and verification. 
We found that dense verification still requires the complete historical KV cache. 
A KV-aware scheduler therefore limits KV-cache repeated offloading and restoration while maintaining rollout parallelism. % OR thrashing
In a matched ablation with an aggressive sparse drafting budget, \modelname maintains stable training while direct sparse sampling with training-side correction becomes unstable. 
On long-context mathematical reasoning, \modelname achieves $1.39\times~1.35\times$ the end-to-end training throughput of autoregressive rollouts on Qwen3 model, with comparable task performance. 
% Our code is avaliable at \url{https://anonymous.4open.science/r/S3R-7666}
\end{abstract}
\section{Introduction}
\label{sec:introduction}

Rollout generation is a primary bottleneck in reinforcement learning with verifiable rewards (RLVR) for large language models
(LLMs)~\citep{DBLP:journals/corr/abs-2412-16720,DBLP:journals/corr/abs-2501-12948}.
Algorithms such as Group Relative Policy Optimization (GRPO) samples groups of responses and  compute group-relative advantages.~\citep{shao2024deepseekmath}.
Generating long responses through autoregressive (AR) decoding makes rollouts a substantial component of the overall training pipeline~\citep{DBLP:journals/corr/abs-2606-18967,DBLP:journals/corr/abs-2512-04752,DBLP:journals/corr/abs-2509-18521},
motivating methods of reducing rollout cost for efficient RLVR.

In our profiled workloads, attention accounts for much of this cost:
each decoding step read the KV cache associated with an increasingly long context
(Figure~\ref{fig:sys-profiling-attention}).
Training-free KV compression and sparse attention can reduce these reads by
restricting context~\citep{DBLP:conf/nips/LiHYVLYCLC24,
DBLP:conf/icml/TangZZXKH24,DBLP:journals/corr/abs-2505-24133}.
When used directly for sampling, sparse attention alters the behavior policy
and can destabilize training~\citep{DBLP:conf/acl/LuoZHZWSSLZ26,DBLP:journals/corr/abs-2606-08446} (Figure~\ref{fig:intro-sparse-stability}).
Training-side filtering and importance reweighting can mitigate this
mismatch~\citep{DBLP:conf/acl/LuoZHZWSSLZ26}.
In GRPO, however, each response's advantage depends on the rewards of
other group members.
We show that even exact response-wise reweighting need not recover the
expected dense policy update.
We therefore seek  to accelerate
rollout generation through sparse computation while preserving the dense policy’s response-group distribution.

Speculative decoding (SD) preserves the target distribution through exact sampling, using a cheaper drafter to propose candidate tokens and the target (i.e., the policy itself) to verify them in parallel
~\citep{pmlr-v202-leviathan23a,DBLP:journals/corr/abs-2302-01318,DBLP:conf/iclr/ZhouLRMRKKA24}.
In practice, the speedup from SD depends on the availability of a suitable drafter with sufficient acceptance.
Such a drafter may not be readily available for fine-tuned target models and may require additional
training and deployment effort.
Moreover, repeated policy updates in RLVR can cause a fixed drafter to become stale, further reducing the acceptance rate.
Auxiliary proposers may produce accurate proposals, but maintaining alignment
between the proposer and the evolving policy introduces optimization and systems overhead
~\citep{DBLP:journals/corr/abs-2509-21792,
DBLP:journals/corr/abs-2510-26475,DBLP:conf/asplos/0004YGY0GC0K026}.
History retrieval methods avoid an auxiliary model but rely on past trajectories covering continuations under the evolving policy~\citep{DBLP:journals/corr/abs-2509-23232,
DBLP:journals/corr/abs-2511-13841,DBLP:journals/corr/abs-2508-18588}.
For both approach, decoding progress must amortize drafting and
verification costs across changing workloads~\citep{DBLP:journals/corr/abs-2606-18967}.
These considerations motivate using sparsification of the behavior policy to generate low-cost
proposals, while retaining dense verification
to sample faithfully from the target distribution.

To this end, we present \textbf{S}elf-\textbf{S}parse
\textbf{S}peculative \textbf{R}ollouts (\modelname), a framework
for fast and faithful RLVR rollouts.
We first analyze the mismatch induced
by sparsity at the response-group level and show why exact response-level
reweighting does not necessarily recover the expected group update under the dense policy.
Based on this analysis, \modelname confines sparsity to proposal
generation and combines full-attention verification with exact
speculative sampling to preserve the dense response-group distribution.
Drafting and verification share the same policy weights,
keeping the drafter synchronized without
separate drafter fine-tuning.
Next, to accommodate the changing RLVR workloads, we characterize how \modelname amortizes drafting and verification costs in the runtime.
During rollout generation, dense verification retains the complete historical KV cache.
Under GPU memory pressure, repeated KV offloading and restoration can cause cache thrashing.
We introduce KV-aware scheduling to limit this thrashing while sustaining
rollout parallelism (Figure~\ref{fig:sys-profiling-kv-work}).
Cross-step state management supports policy synchronization and
inference restoration after each actor update.
Finally, experiments on long-context mathematical reasoning demonstrate stable training under aggressive sparse
drafting and higher end-to-end training throughput than AR, with comparable task performance.
Our contributions are threefold:
\begin{itemize}[leftmargin=*, nosep]
    \item \textbf{Group-level mismatch analysis.}
    We show that exact response-wise reweighting need not recover the
    dense policy's expected update, and provide formulation and design principles for RL rollout.

\item \textbf{Faithful sparse self-speculative rollouts.}
We develop \modelname by integrating sparse self-drafting with
exact sampling, preserving the dense response-group distribution
without separate drafter training.
We characterize its speedup conditions and provide runtime support
for efficient RLVR.

    \item \textbf{Practical long-context acceleration.}
On mathematical RLVR workloads, \modelname increases end-to-end
training throughput over AR by 1.39$\times$ on Qwen3-1.7B and 1.35$\times$ on
Qwen3-8B, while maintaining stable training and comparable
task performance.
\end{itemize}

% Sparse drafting and full-attention verification share the same
% policy weights, eliminating separate drafter training and updates.
% Verification supplies dense probabilities for exact sampling and attention
% statistics to refresh the sparse context for the next draft block.
% At fixed learner parameters, unchanged training operations therefore yield 
% gradient estimates with the same distribution as those obtained from dense AR rollouts.
% Full-attention verification still requires complete dense KV state.
% As responses grow, the active KV working set can exceed GPU capacity,
% triggering offloading and restoration.
% We therefore use a KV-aware AIMD scheduler to balance rollout parallelism
% against repeated KV reconstruction and transfer
% (Figure~\ref{fig:sys-profiling-kv-work}).
% Cross-step state management synchronizes policy weights and restores
% rollout execution after each actor update.
% Experiments on long-context mathematical reasoning show stable training
% with a 5\% drafting budget, whereas the evaluated direct sparse variant
% with training-side corrections collapses at the same budget
% (Figure~\ref{fig:stability-verification}).
% \modelname sustains end-to-end speedups as the policy and response lengths
% evolve (Figure~\ref{fig:rollout_acceleration_discussion}).
% It achieves $1.39\times$ end-to-end training throughput on Qwen3-1.7B and
% $1.35\times$ on Qwen3-8B relative to AR rollouts, with
% comparable task performance.

\begin{figure}[t]
  \vspace{-5pt}
  \centering
  \captionsetup[subfigure]{font=small,skip=2pt}
  \begin{subfigure}[b]{0.32\linewidth}
    \centering
    \includegraphics[width=\linewidth]{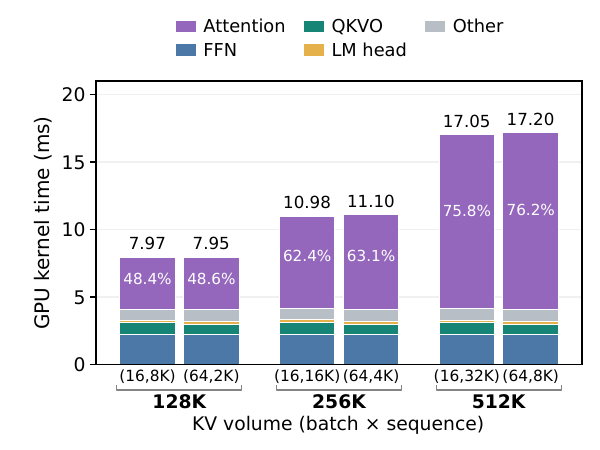}
    \caption{Attention dominates AR.}\label{fig:sys-profiling-attention}
  \end{subfigure}\hfill
  \begin{subfigure}[b]{0.32\linewidth}
    \centering
    \includegraphics[width=\linewidth]{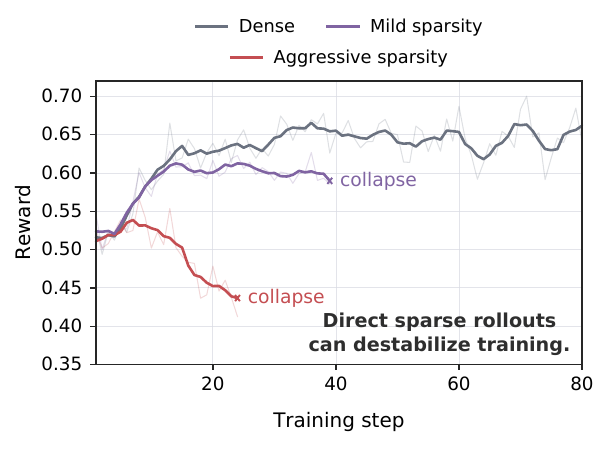}
    \caption{Sparse-rollout collapse.}\label{fig:intro-sparse-stability}
  \end{subfigure}\hfill
  \begin{subfigure}[b]{0.32\linewidth}
    \centering
    \includegraphics[width=\linewidth]{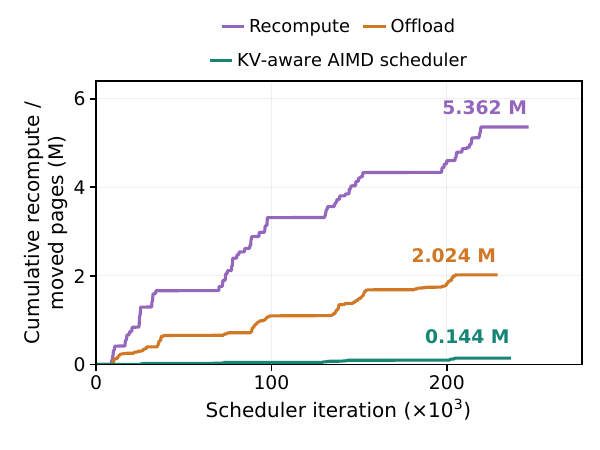}
    \caption{KV-aware scheduling.}\label{fig:sys-profiling-kv-work}
  \end{subfigure}
  \caption{
  (\subref{fig:sys-profiling-attention}) Attention accounts for 72--76\% of AR-decoding kernel time as KV volume grows (Full results in Appendix~\ref{app:why-ssrl-e1}).
  (\subref{fig:intro-sparse-stability}) Both mild and aggressive sparsity exhibit training collapse without training-side mismatch correction.
  (\subref{fig:sys-profiling-kv-work}) Cumulative KV pages recomputed or transferred; our scheduler substantially reduces repeated KV reconstruction and transfer.
\vspace{-18pt}
  }
  \label{fig:sys-profiling}
\end{figure}

\section{Related Work}
\label{sec:related_works}

\subsection{Efficient Reinforcement Learning for LLMs}
% 参考 docs/reference_paper/efficient_rollout/latex/paper.tex 的 2.1 Reinforcement Learning for LLMs
% 同时参考 docs/reference_paper/sparse_rl/latex/paper.tex 的 2 Efficient Reinforcement Learning for LLM
% 注意这里着重分析 RL 的瓶颈和过程，不与下面的 subsection重叠
RL post-training methods improve the efficiency and stability of policy
optimization~\citep{DBLP:journals/corr/SchulmanWDRK17,shao2024deepseekmath,
DBLP:journals/corr/abs-2501-12948,DBLP:journals/corr/abs-2507-18071,
DBLP:journals/corr/abs-2512-23075}.
System work further improves GPU utilization through flexible orchestration,
asynchronous overlap, and tail-aware scheduling~\citep{DBLP:conf/eurosys/ShengZYWZZPL025,
DBLP:conf/nips/FuGSZMHXWMWYYW25,DBLP:journals/corr/abs-2511-14617,
DBLP:journals/corr/abs-2603-23414}.
Nevertheless, rollout generation remains sequential across token positions,
and long responses increase both decoding latency and KV-cache
pressure~\citep{DBLP:journals/corr/abs-2606-18967,DBLP:conf/sosp/KwonLZ0ZY0ZS23}.
These systems improve the orchestration of RL training and are 
complementary to reducing the cost of sampling each rollout.

\subsection{Sparse and Other Approximate RL Rollouts}
Approximate rollouts (e.g. Sparsification) reduce generation cost but sample from a policy that
differs from the dense policy rollout. Sparse-attention approaches
alleviate this gap through filtering, importance weighting, or approximation
schedules constrained by distributional
statistics~\citep{DBLP:conf/acl/LuoZHZWSSLZ26,DBLP:journals/corr/abs-2606-08446}.
Low-precision rollout methods instead use mismatch-aware optimization or align
training and inference numerics~\citep{DBLP:journals/corr/abs-2602-13953,
DBLP:journals/corr/abs-2510-18855,DBLP:journals/corr/abs-2605-13907,
DBLP:conf/acl/GuWLLZLXWYLHG26,DBLP:journals/corr/abs-2510-26788}.
Other approaches reduce generation cost through early-exit inference or proxy
rollout models~\citep{DBLP:conf/acl/ElhoushiSLHWL0A24,DBLP:journals/corr/abs-2602-06107}.
These mechanisms control or compensate for approximation in rollout generation. \modelname
instead uses self-sparse proposals and full-attention verification
before reward and advantage computation. It targets long-context
KV traffic without changing the standard dense
rollout-policy distribution.

\subsection{Speculative Decoding for Inference and RL Rollouts}
Speculative decoding amortizes autoregressive decoding by verifying multiple
draft tokens in parallel and, under exact rejection sampling, preserves the
target distribution~\citep{pmlr-v202-leviathan23a,DBLP:journals/corr/abs-2302-01318}.
Methods mainly differ in their draft source. Retrieval-based
methods reuse stored text or cached rollouts, avoiding drafter training but
depending on recurrence between available proposals and new
trajectories~\citep{DBLP:conf/naacl/0012ZCL024,DBLP:journals/corr/abs-2509-23232,
DBLP:journals/corr/abs-2601-09083,DBLP:journals/corr/abs-2511-13841}.
Auxiliary proposers can be adapted online to changing policies or serving
workloads, at the cost of maintaining and updating a separate
network~\citep{DBLP:journals/corr/abs-2509-21792,
DBLP:journals/corr/abs-2510-26475,DBLP:conf/asplos/0004YGY0GC0K026,
DBLP:journals/corr/abs-2602-06932}.
Target-induced methods instead derive the drafter from the current policy.
They use layer skipping~\citep{DBLP:conf/acl/Zhang00S0CM24,DBLP:conf/iclr/Xia00DL25,
DBLP:conf/acl/ChenSWWLLWAY25,DBLP:conf/eacl/SongXZLXLL26},
sparse attention~\citep{DBLP:journals/corr/abs-2512-01278,DBLP:journals/corr/abs-2602-07223,
DBLP:journals/corr/abs-2607-27735}, or quantization~\citep{DBLP:conf/icml/TiwariXTH0HNMKG25}
to reduce drafting cost.
\modelname uses sparse-attention self-drafting for group-based RLVR:
it analyzes the limitations of response-wise correction, specifies conditions for preserving dense response groups, and supports repeated rollout–optimization transitions.
Concurrent EfficientRollout~\citep{DBLP:journals/corr/abs-2606-18967} uses quantized self-drafting to reduce weight-loading costs. Both approaches avoid separate drafter training, but target different sources of rollout cost.

\section{Problem Formulation and Design Principles}
\label{sec:preliminary}

\subsection{Sparsity-induced Mismatch in RLVR}
\label{sec:preliminary-policy-mismatch}

In RLVR, policy-gradient updates depend on the behavior policy that generates responses to a prompt $x$.
Let $o^{i}$ denote the $i$-th complete response, $o_t^{i}$ its token at
position $t$, and $o_{<t}^{i}$ its preceding prefix.
The learner assigns probability $\pi_\theta(o_t^{i}\mid x,o_{<t}^{i})$
to next token generation. The
behavior policy is typically a frozen dense snapshot
$\pi^{\mathrm{dense}}_{\theta_{\mathrm{old}}}$ of the learner. For each prompt,
$G$ responses sampled independently from this policy form a group
$\mathcal O=(o^{1},\ldots,o^{G})$, whose rewards determine the group-relative
advantages used in policy optimization.

To reduce rollout generation cost, we investigate sparsification of $\pi^{\mathrm{dense}}_{\theta_{\mathrm{old}}}$ as a rollout generation mechanism. Sparse attention restricts KV reads without changing the policy parameters, but changes the resulting token distribution.
Running the frozen snapshot with a sparse budget $s\in(0,1]$ defines a proposal policy
$\pi^{\mathrm{sparse}}_{\theta_{\mathrm{old}},s}$.
If sparse responses are used directly for training,
$\pi^{\mathrm{sparse}}_{\theta_{\mathrm{old}},s}$ becomes the behavior policy.
The corresponding token-level importance sampling (IS) ratio decomposes
as:
\begin{equation}
 \begin{aligned}
 \hat{w}
  =\underbrace{\frac{\pi_\theta(o_t^{i}\mid x,o_{<t}^{i})}
                   {\pi^{\mathrm{dense}}_{\theta_{\mathrm{old}}}(o_t^{i}\mid x,o_{<t}^{i})}}
   _{\mathclap{\scalebox{0.9}{$\scriptstyle w(\theta)\;\text{standard policy update}$}}}
 \;\times\;
 \underbrace{\frac{\pi^{\mathrm{dense}}_{\theta_{\mathrm{old}}}(o_t^{i}\mid x,o_{<t}^{i})}
                   {\pi^{\mathrm{sparse}}_{\theta_{\mathrm{old}},s}(o_t^{i}\mid x,o_{<t}^{i})}}
   _{\mathclap{\scalebox{0.9}{$\scriptstyle w_s\;\text{sparsity-induced mismatch}$}}}.
 \end{aligned}
 \label{eq:preliminary-mismatch-decomposition}
\end{equation}
where $w(\theta)$ captures the standard discrepancy caused by policy updates, and $w_s$ arises from sparsification.
Our objective is to accelerate generation while preserving the faithfulness of the final response groups by $\pi^{\mathrm{dense}}_{\theta_{\mathrm{old}}}$.
Because GRPO advantages depend on the complete group, we next examine whether correcting each response separately is sufficient to recover the dense policy update.

% This preservation target concerns complete response groups, whereas Eq.~\ref{eq:preliminary-mismatch-decomposition} describes a token-level
% discrepancy. We therefore examine whether reweighting sampled responses after generation is sufficient to recover the dense policy's group-relative update.

\subsection{Implications of Sparse Rollouts for GRPO Updates}
\label{sec:preliminary-learning-implications}

Training-side methods mitigate the mismatch in
Eq.~\ref{eq:preliminary-mismatch-decomposition} by reweighting or filtering sampled responses~\citep{DBLP:conf/acl/LuoZHZWSSLZ26,DBLP:journals/corr/abs-2510-18855}. 
To isolate the effect of group-relative advantage computation, we consider an idealized correction that uses exact response likelihood ratios rather than local token ratios (e.g., IS). This asks whether reweighting each response’s own probability is sufficient when its update depends on the other group members.
We denote $P(o^{i}\mid x)=\pi^{\mathrm{dense}}_{\theta_{\mathrm{old}}}(o^{i}\mid x)$
and
$Q_s(o^{i}\mid x)=\pi^{\mathrm{sparse}}_{\theta_{\mathrm{old}},s}(o^{i}\mid x)$
as the probabilities of the complete response $o^{i}$ under
dense and sparse rollout, respectively.

\begin{proposition}[Residual GRPO Update Bias under Response-Wise Reweighting]
\label{prop:ssrl-response-correction} % 我删除了“, and assume support coverage and finite expectations“，我觉得这部分可是是默认，所以多余了
Fix the learner and rollout parameters. Multiplying each sparse response’s otherwise unchanged update contribution by its exact response weight $W_i=P(o^{i}\mid x)/Q_s(o^{i}\mid x)$, does not, in general, recover the expected update under dense response groups.
\end{proposition}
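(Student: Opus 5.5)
Since the claim is negative ("does not, in general, recover"), I will write down the two expected updates explicitly and then exhibit one concrete instance where they differ. Let $g_i(\mathcal O)=A_i(r(o^1),\ldots,r(o^G))\,\nabla_\theta\log\pi_\theta(o^i\mid x)$ be the update contribution of response $i$, with $A_i$ the group-relative advantage, e.g.\ $A_i=r_i-\frac1G\sum_j r_j$ (standardization is not needed). The dense target is $U_P=\mathbb E_{\mathcal O\sim P^{\otimes G}}\big[\sum_i g_i(\mathcal O)\big]$. The corrected sparse estimator is $U_W=\mathbb E_{\mathcal O\sim Q_s^{\otimes G}}\big[\sum_i W_i\,g_i(\mathcal O)\big]$.

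The key step is to expand $U_W$ by conditioning on $o^i$. Reweighting by $W_i$ turns the marginal of $o^i$ into $P$, but the other $G-1$ members are still drawn from $Q_s$. Hence
\[
U_W=\sum_i \mathbb E_{o^i\sim P,\;o^{-i}\sim Q_s^{\otimes(G-1)}}\big[g_i(\mathcal O)\big].
\]
For the mean-baseline advantage this yields
\[
U_W-U_P=-\frac{G-1}{G}\,\big(\mathbb E_{Q_s}[r]-\mathbb E_P[r]\big)\sum_i \mathbb E_P\big[\nabla_\theta\log\pi_\theta(o^i\mid x)\big].
\]
The identical-distribution symmetry lets me collapse the sum over $i$. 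To exact recovery one would need the full product weight $\prod_j W_j$, and the single factor $W_i$ is not enough because $A_i$ depends on $r(o^j)$ for $j\neq i$.

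The remaining task, which I expect to be the main (though mild) obstacle, is to make the residual nonzero. The score term $\mathbb E_P[\nabla_\theta\log\pi_\theta(o\mid x)]$ vanishes when $\theta=\theta_{\mathrm{old}}$, so I take $\theta\neq\theta_{\mathrm{old}}$; the statement fixes the learner and rollout parameters separately, so this is permitted. I will build a toy example with a two-response vocabulary $\{a,b\}$, $r(a)=1$, $r(b)=0$, dense $P(a)=p$, sparse $Q_s(a)=q\neq p$, and a learner $\pi_\theta(a)=\sigma(\theta)$ with $\sigma(\theta)\neq p$. Then $\mathbb E_P[\nabla\log\pi_\theta]=p-\sigma(\theta)\neq0$ and $\mathbb E_{Q_s}[r]-\mathbb E_P[r]=q-p\neq0$, so the residual is nonzero for $G\ge2$.

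As a backup in case the paper's convention forces $\theta=\theta_{\mathrm{old}}$ with on-policy scores, I would use the standardized GRPO advantage $(r_i-\bar r)/\mathrm{std}$. In that case the nonlinearity in the other members' rewards already gives a nonzero residual in the same two-point model with $G=2$, which I would verify by enumerating the four group outcomes. Either construction suffices, because a single counterexample establishes ``not in general.''
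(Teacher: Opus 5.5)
Your change-of-measure step is correct and is the paper's identity: $W_i$ moves the focal response to $P$, while the other $G-1$ members stay under $Q_s$. The gap is that neither of your instances is a GRPO counterexample. Your own residual formula shows that with a linear (mean-baseline) advantage, the bias is proportional to the $P$-mean of the per-response gradient factor. Your primary construction makes that mean nonzero only by moving to $\theta\neq\theta_{\mathrm{old}}$ while dropping the policy ratio. Away from $\theta_{\mathrm{old}}$, the GRPO surrogate referenced to the dense snapshot contributes $w(\theta)A_i\,\partial_\theta\log\pi_\theta(o^i\mid x)$; this is the $w(\theta)$ factor of Eq.~\ref{eq:preliminary-mismatch-decomposition}, which the paper's learner uses. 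In your one-token $\{a,b\}$ model,
\[
\mathbb{E}_{o\sim P}\Bigl[\tfrac{\pi_\theta(o)}{P(o)}\,\partial_\theta\log\pi_\theta(o)\Bigr]
=\partial_\theta\bigl(\pi_\theta(a)+\pi_\theta(b)\bigr)=0
\]
holds for every $\theta$, so the residual vanishes identically. The nonzero residual you compute therefore comes from a ratio-free update, not from group-relative advantages. Your remark that standardization is not needed holds only for that ratio-free rule. For the (unclipped) GRPO surrogate, some nonlinearity in the other members' rewards is essential.

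Your backup also fails as specified. With binary rewards and $G=2$, the standard deviation depends only on $|r(u)-r(v)|\in\{0,1\}$, and the centered reward vanishes when that gap is $0$. So the standardized advantage is exactly $c\,(r(u)-r(v))$ with $c=1/(\sqrt2+2\varepsilon)$, which is linear again. At $\theta_{\mathrm{old}}$, let $\psi(u)=\partial_\theta\log\pi_\theta(u)$ and $m=P(a)\psi(a)=-P(b)\psi(b)$. Enumerating the four outcomes gives the focal expectation $c\,m\,Q_s(b)+c\,m\,Q_s(a)=c\,m$ for every $Q_s$, so the residual is zero. The same cancellation occurs for binary rewards with $G=3$, because the group standard deviation is the same with one or two correct answers.

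What is missing is an advantage that depends nonlinearly on the comparison rewards, so that $H(v)=\sum_u P(u)g(u,v)$ actually varies with $v$. The paper gets this with three reward levels $(0,1,2)$, $G=2$, and $\theta=\theta_{\mathrm{old}}$. The advantage magnitude for a reward gap $d$ is $c_d=d/(\sqrt2 d+2\varepsilon)$, with $c_2\neq 2c_1$. Hence $H=\tfrac{2c_1-c_2}{9}(1,0,-1)$, which cancels under uniform $P$ but gives $G_{\mathrm{corr}}=(2c_1-c_2)/27>0$ under $Q_s=(1/2,1/3,1/6)$. Two other routes would also work: binary rewards with $G=4$, where the standard deviation differs between one and two correct answers; or $\theta\neq\theta_{\mathrm{old}}$ with the ratio kept and clipping active on exactly one outcome. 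As written, though, the proposal does not yet produce a valid instance.
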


The weight $W_i$ corrects for response $i$'s own sampling mismatch, 
but its GRPO advantage still depends on the rewards of other responses drawn from $Q_s$.
Thus, sparse sampling changes not only which response is observed, but also the reward comparisons that determine its group-relative advantage.
Appendix~\ref{app:ssrl-exactness} provides the supporting assumptions and proofs.
To retain the baseline update without redesigning group-level correction, we place the sparsification in candidate generation and preserve faithfulness of the response groups to the dense policy.

\subsection{Preserving Dense Response Groups via Exact Sampling}
\label{sec:preliminary-self-sd-claim}

This placement requires a sampling procedure whose outputs follow 
taregt distribution, even when its candidates are not. We achieve this separation through exact speculative sampling.
At each candidate position,  write $p(v)$ and $q_s(v)$ for the
target and drafted candidate probabilities at the same prefix. Standard speculative
sampling~\citep{pmlr-v202-leviathan23a,DBLP:journals/corr/abs-2302-01318}
checks candidates in token order, accepting each with probability
$\min\{1,p(v)/q_s(v)\}$ using fresh randomness.
At the first rejection, the rejected candidate and its follow-up suffix are discarded, and a correction token is sampled from the residual distribution.
\begin{equation}
 r_s(v)=\frac{[p(v)-q_s(v)]_+}{\sum_u[p(u)-q_s(u)]_+},
 \label{eq:ssrl-residual-sampling}
\end{equation}
where $[z]_+=\max\{z,0\}$.
Acceptance and residual sampling together recover the dense conditional distribution at 
verified positions.

\begin{proposition}[Dense response-group preservation]
\label{prop:ssrl-exactness}
Fix the rollout snapshot $\theta_{\mathrm{old}}$ and prompt $x$.
Generate all $G$ responses independently with sparse proposals $q_s$
and exact speculative sampling targeting $p$, using the dense baseline's
sampling rules. For any valid sparse budget $s$, the resulting
response-group distribution $\Pr_{\text{\modelname},s}(\mathcal O\mid x)$ matches the
dense AR response-group distribution:
\begin{equation}
 \Pr_{\text{\modelname},s}(\mathcal O\mid x)=\prod_{i=1}^{G}P(o^{i}\mid x),
 \qquad \mathcal O=(o^{1},\ldots,o^{G}).
 \label{eq:ssrl-group-distribution}
\end{equation}
\end{proposition}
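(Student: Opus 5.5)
The plan reduces the group statement to a per-response statement and then to a per-token statement, proved by the standard speculative-sampling lemma and an induction over verification rounds.

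\emph{Step 1: factor over the group.} The $G$ responses are generated independently. Each uses its own fresh randomness for drafting, acceptance coins and residual draws. Hence $\Pr_{\text{\modelname},s}(\mathcal O\mid x)=\prod_i \Pr_{\text{\modelname},s}(o^i\mid x)$. It therefore suffices to show that a single response produced by \modelname{} has law $P(\cdot\mid x)$, i.e. the dense AR law under the same sampling rules (temperature, top-$p$, stop tokens). We take $p$ to be the dense conditional after these rules have been applied.

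\emph{Step 2: the single-position lemma.} Fix a prefix $h$, a target $p=p(\cdot\mid h)$ and a proposal $q_s=q_s(\cdot\mid h)$. Assume the support of $p$ is contained in that of $q_s$, or handle the case where it is not via the residual. Sample $v\sim q_s$ and accept with probability $\min\{1,p(v)/q_s(v)\}$; otherwise draw from $r_s$ in \eqref{eq:ssrl-residual-sampling}. The output then has law $p$, because
\[
\Pr(\text{out}=v)=\min\{q_s(v),p(v)\}+\Bigl(1-\textstyle\sum_u\min\{q_s(u),p(u)\}\Bigr)r_s(v).
\]
Using $1-\sum_u\min\{q_s(u),p(u)\}=\sum_u[p(u)-q_s(u)]_+$, the right-hand side collapses to $p(v)$. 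The degenerate case $p=q_s$, where the residual is undefined, occurs with rejection probability zero. If all $\gamma$ drafts in a block are accepted, the bonus token is drawn directly from the dense $p$.

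\emph{Step 3: induction over the token sequence.} Index emitted tokens by position $t$. The claim is that, conditional on the emitted prefix $o_{<t}$ and on the entire past randomness, $o_t$ has law $p(\cdot\mid x,o_{<t})$. The subtle point is that within a block, the draft at position $t+1$ was proposed before position $t$ was verified. However, the draft is only used if position $t$ was accepted, in which case the prefix the draft conditioned on equals the emitted prefix. Once a rejection occurs, the remaining drafts are discarded and the next block restarts from the corrected prefix.

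Drafts beyond position $t$ are functions of $o_{\le t}$ and fresh randomness, so they carry no extra information about $o_t$. The sparse context selection may depend on dense attention statistics from earlier verification, but these are deterministic functions of the emitted prefix and of $\theta_{\mathrm{old}}$. Thus $q_s$ is a valid prefix-measurable proposal. Applying Step 2 conditionally at each position and chaining gives $\Pr(o\mid x)=\prod_t p(o_t\mid x,o_{<t})=P(o\mid x)$. Termination at EOS or at the length cap is governed by the same dense rule, so the stopping time matches.

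\emph{Main obstacle.} The delicate step is the conditioning argument in Step 3. We must verify that no information leaks from discarded drafts, the block length, or adaptive sparse-context selection into the law of later emitted tokens. The approach is to define a filtration generated by the emitted prefix together with the independent uniforms consumed so far. We then show that every proposal $q_s$ and every draft used at position $t$ is measurable with respect to $o_{<t}$ and randomness independent of future acceptance coins. Under this filtration, Step 2 applies verbatim at each position, and the independence across the $G$ responses in Step 1 yields \eqref{eq:ssrl-group-distribution}.
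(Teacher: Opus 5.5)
Your proposal is correct and follows the paper's proof. Both use the one-token identity $\min\{p,q_s\}+[p-q_s]_+=p$ conditionally on the whole preceding drafter state, the observation that acceptance at a position never uses later candidates, the chain rule under the same termination rules, and independence across the $G$ requests. One intermediate claim is inaccurate but unnecessary: here the sparse context is not a deterministic function of the emitted prefix and $\theta_{\mathrm{old}}$, because queries of later-rejected candidates still enter $\mathbf a_\ell$ and $\mathcal R_\ell$ depends on the random verification boundaries. The identity only needs $q_s$ to be determined by the past, which your filtration of consumed uniforms already provides, just as the paper conditions on ``any drafter state determined by earlier sampling and verification.''
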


At fixed learner parameters, the training pipeline therefore
produces the same distribution as dense AR rollouts.
The learner uses the standard ratio $w(\theta)$ in
Eq.~\ref{eq:preliminary-mismatch-decomposition}, without an additional
sparsity-induced correction.
Appendix~\ref{app:ssrl-exactness}
gives the full conditions and proof.
Under these conditions, the sparse budget changes decoding cost, but not the response-group distribution. It can therefore be selected for 
generation efficiency. We instantiate this separation using sparsification of the $\pi^{\mathrm{dense}}_{\theta_{\mathrm{old}}}$, avoiding a separately trained drafter. Section 4 specifies the drafting and dense verification mechanisms.

\section{\modelname: Self-Sparse Speculative Rollouts for Reinforcement Learning}
\label{sec:methodology}

\modelname implements the separation through two coupled designs.
(1) Self-drafting with sparse KV states to reduce latency without a separately trained proposer. 
(2) Full-attention verification supplies the probabilities for exact sampling and recomputes dense states for subsequent generation. Both designs share a snapshot $(\theta_{\mathrm{old}})$, synchronized after each actor update and held fixed until all requested responses are complete. 
Figure~\ref{fig:ssrl-overview} illustrates these connections, and
Algorithm~\ref{alg:ssrl-rollout} gives the rollout and policy-update loop.
% \modelname organizes sparse
% self-drafting and exact full-attention verification around the RL
% training loop.
% At each RL step, we synchronize the actor’s weights to a shared rollout 
% snapshot $(\theta_{\mathrm{old}})$ for drafting and verification, 
% and keep it fixed until all response groups are complete.
% % Sharing the rollout-policy weights keeps the drafter synchronized with
% % the actor, avoiding separate drafter training or updates.
% Figure~\ref{fig:ssrl-overview} illustrates these connections, and
% Algorithm~\ref{alg:ssrl-rollout} gives the rollout and policy-update loop.

\begin{figure*}[t]
  \centering
  \includegraphics[width=\textwidth]{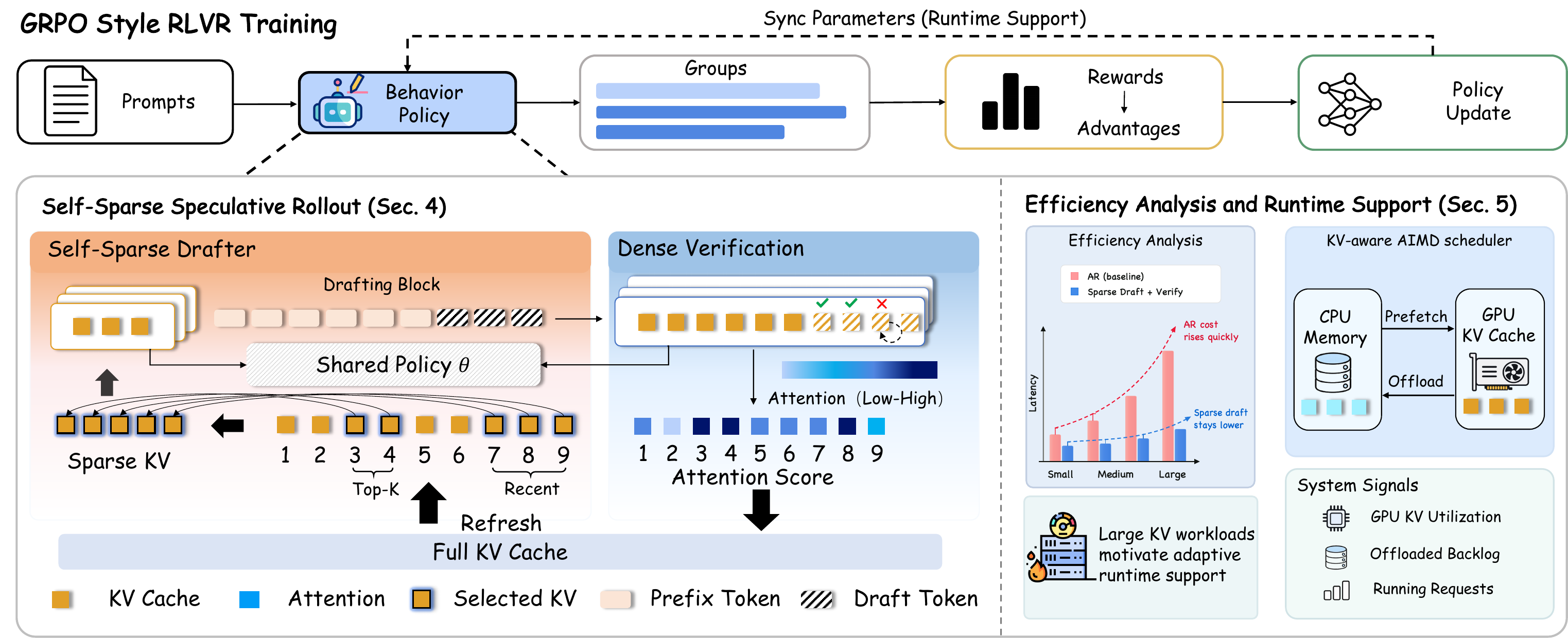}
  \caption{
  Overview of \modelname's rollout mechanism and runtime support.
  Sparse drafting reduces RL rollout cost, while dense verification
  preserves rollout fidelity with KV-aware runtime support.}
  \label{fig:ssrl-overview}
\end{figure*}

\paragraph{Sparse Drafting from the Shared Policy.}
Dense prefill generates the first output token. The drafter uses the full
KV cache for the first block. For each later block, it selects historical KV
entries using attention scores from the previous verification
pass, following \citet{DBLP:journals/corr/abs-2512-01278}.
Let $\mathcal C_\ell$ denote the valid cached KV index set at layer $\ell$
when the sparse context is refreshed. The configured sparse budget ratio
$s$ specifies the target fraction of these indices to retain.
For each KV head (head indices omitted), we retain recent cached positions
$\mathcal R_\ell\subseteq\mathcal C_\ell$ around the verification boundary
and select the remaining positions using the verifier's attention scores
$\mathbf a_\ell$:
\begin{equation}
 \mathcal I_\ell=\mathcal R_\ell\cup
 \operatorname{TopK}_{\lceil s|\mathcal C_\ell|\rceil-|\mathcal R_\ell|}(\mathbf a_\ell).
 \label{eq:ssrl-draft-selection}
\end{equation}
Here $|\mathcal R_\ell|\le\lceil s|\mathcal C_\ell|\rceil$, and Top-K
selects the highest-scoring indices in $\mathcal C_\ell\setminus\mathcal R_\ell$.
% This simplified rule omits the minimum-budget constraint detailed in
% Appendix~\ref{app:ssrl-sparse-context}.
Each draft step applies attention to this view of the complete KV cache.
Across $k$ autoregressive steps, it holds the historical selection fixed
and extends $\mathcal I_\ell$ with provisional KV entries for newly processed
tokens. 
The drafter records each candidate and its full proposal distribution for the exact sampling step.

\paragraph{Full-Attention Verification and Context Refresh.}
The verifier evaluates the draft tokens in a single full-attention causal forward.
Full attention alone is not sufficient if it reuses approximate historical states; verification therefore operates on dense historical KV and recomputes the proposed block through all layers.
Then, we apply the exact sampling in Section~\ref{sec:preliminary-self-sd-claim}.
Verification replaces provisional draft states with recomputed dense KV entries, and removes the unused candidate suffix. 
The verifier's attention weights are averaged to obtain $\mathbf a_\ell$.
After verification, we update $\mathcal C_\ell$ and $\mathcal R_\ell$ for the accepted 
prefix and refresh $\mathcal I_\ell$ using the verifier’s scores in Eq.~\ref{eq:ssrl-draft-selection}.
Once all requested responses are complete, the trainer assembles them by
prompt and group identity and computes rewards and group-relative
advantages.

\section{Efficiency Analysis and Runtime Support}
\label{sec:efficiency-runtime}

Self-sparse drafting reduces historical KV reads but retains
projections and feed-forward computation, and dense verification.
% This retained computation can make self-drafting costlier
% than a pretrained auxiliary proposer but shared weights 
% avoid separate drafter training and parameter staleness.
Its benefit therefore depends on the cost of SD and on whether runtime state management preserves the resulting computational savings.

\paragraph{Latency Analysis.}
\label{sec:efficiency-latency}

Consider a locally fixed active batch size 
$B$ and cached context length $S$. Let
$T_p$ and $T_q$ denote the dense and sparse per-step latencies, and
$T_V$ the verification latency. Each round drafts $k$ tokens and outputs
$\tau$ tokens on average per request (block efficiency), including a correction or bonus token when applicable. For sequential drafting and verification,
the estimated round latency $T_{\mathrm{SD}}$ and speedup $\eta$ over AR are~\citep{pmlr-v202-leviathan23a,sadhukhan2025magicdec}:
\begin{equation}
 T_{\mathrm{SD}}=kT_q+T_V,
 \qquad
 \eta=\frac{T_p}{T_{\mathrm{SD}}/\tau}.
 \label{eq:efficiency-sd-accounting}
\end{equation}
Acceleration therefore requires
$
 \eta>1
 \Longleftrightarrow
 \tau>({kT_q+T_V})/{T_p}.
$
The sparse budget therefore trades off drafting cost against accepted progress: higher acceptance is useful only when its benefit exceeds the additional drafting cost.

% Reserve room for the complete figure; wrap only the two adjacent paragraphs.
\begingroup
\setlength{\intextsep}{4pt}
\Needspace{18\baselineskip}
\begin{wrapfigure}[14]{r}{0.31\linewidth}
  \centering
  \includegraphics[width=\linewidth]{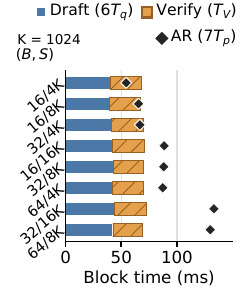}
  \captionsetup{font=footnotesize,skip=2pt,justification=raggedright,singlelinecheck=false}
  \caption{Trade-off costs on Qwen3-8B ($s=0.05$, $k=6$).}
  \label{fig:efficiency-stage-costs}
\end{wrapfigure}
Figure~\ref{fig:efficiency-stage-costs} illustrates this trade-off using separately measured stage latencies with $k=6$.
At $(B,S)=(16,4\mathrm{K})$, drafting and verification  already exceed the $7T_p$ AR reference, even with full acceptance.
With an aggregate KV volume $N_{\mathrm{KV}}=B\cdot S\approx5.24\times10^5$ tokens, the corresponding break-even block efficiency falls to 
about 3.8 output tokens per round (Table~\ref{tab:why-ssrl-e2-stages}).
These serialized estimates exclude no-forward stalls and cross-request phase mixing; Appendix~\ref{app:why-ssrl-cost} extends the analysis to mixed-stage execution.

% leaving a margin for acceptance loss
% and additional runtime costs.
% These costs give a break-even block efficiency of about
% 3.8 output tokens per round, before runtime overhead.
% The estimates assume sequential stages and exclude intervals without
% forward execution.
% Appendix~\ref{app:why-ssrl-cost} gives the cost breakdown and extends
% the analysis to mixed drafting and verification across requests.
% This cost margin is particularly relevant to long-response RLVR, in which sparse drafting can save attention cost.

\paragraph{Runtime Support for RL Rollouts.}
\label{sec:runtime-support}

Dense verification retains the full historical KV cache even when sparse
drafting reduces reads. Under GPU memory pressure, repeated eviction and
recovery can cause KV-cache thrashing~\citep{chen2026concur}.
CPU offloading avoids prefix recomputation, but repeated KV transfers can
still consume the available latency margin.

A permanently low concurrency limit can reduce thrashing but sacrifices
batching when capacity is available. We use a KV-aware additive increase
and multiplicative decrease (AIMD) scheduler to adapt request admission.
GPU KV utilization measures resident pressure, while the restoration backlog
records outstanding demand from paused requests.
Under low utilization, the concurrency limit increases additively; otherwise,
high utilization or an excessive backlog triggers multiplicative backoff.
Paused requests are restored before new requests are admitted, subject to the
concurrency limit and available KV capacity.

Paused requests retain their token and proposal states.
All requested responses complete under the same frozen rollout snapshot
and retain their original group identities, maintaining the complete-group
requirement of Proposition~\ref{prop:ssrl-exactness}.
Between RL steps, the runtime releases request and KV state for actor optimization, then synchronizes the updated policy and rebuilds inference state before the next rollout. Appendix~\ref{app:ssrl-implementation} provides the scheduling rules and lifecycle details.
\par
\endgroup

\section{Experiments}
\label{sec:experiments}

\subsection{Setup}
\label{sec:setup}

\paragraph{Training Details.}
We initialize policies from Qwen3-1.7B and Qwen3-8B~\citep{DBLP:journals/corr/abs-2505-09388}
and train them on DAPO-MATH-17k~\citep{DBLP:conf/nips/YuZZYZYDFLLLLLL25}.
Optimization follows the DAPO variant of GRPO~\citep{shao2024deepseekmath}. 
Each batch contains 256
prompts with eight responses per prompt, sampled at temperature 1.0.
The maximum response lengths are 16K and 24K tokens for the 1.7B and 8B
models, respectively.
All methods run in
veRL~\citep{DBLP:conf/eurosys/ShengZYWZZPL025}
on a single node with eight H100 GPUs. For \modelname, both models use
a draft length of $k=6$, a sparse budget ratio of $s=0.05$, and the KV-aware AIMD scheduler in
Appendix~\ref{app:ssrl-scheduling}. Full training and system configurations
are provided in Appendix~\ref{app:experimental-setup}.

\paragraph{Baselines.}
We compare \modelname against dense \textbf{AR} decoding and three representative speculative proposal strategies.
\textbf{N-gram Lookup} uses prompt lookup decoding~\citep{saxena2023prompt}
to propose continuations from matching token suffixes within the current
prompt or response.
\textbf{History Retrieval} follows RhymeRL~\citep{DBLP:journals/corr/abs-2508-18588}
and retrieves draft continuations from previous responses to the same prompt.
\textbf{Auxiliary Proposer} uses a pretrained EAGLE3
network~\citep{DBLP:conf/nips/LiWZZ25} with $k=5$, keeping its weights
frozen as the target policy evolves. The main comparison uses no
additional drafter training; online co-training is examined in
Appendix~\ref{app:checkpoint-portability}. Baseline implementation
details are given in Appendix~\ref{app:experimental-baselines}.

\paragraph{Metrics.}
We report rollout generation time, end-to-end training-step time, and
training throughput (tokens/s per GPU), with speedups relative to AR.
Rollout time covers response generation, while step time also includes
reward computation, actor updates, and system overhead. 
Block efficiency $\tau$ is the average number of output tokens per
speculative round, including the bonus token. We evaluate
model quality using AIME26 and ArXiv-Math pass@1 scores from the best
checkpoint. Measurement and reporting details are provided in
Appendix~\ref{app:experimental-metrics}.

\subsection{End-to-End Acceleration with Comparable Task Performance}

\begin{table}[h]
  \centering
  \small
  \caption{Main results on Qwen3-1.7B and Qwen3-8B. Benchmark scores are pass@1 (\%) from the best checkpoint. Timing and throughput cover the full training run after the first warm-up step. Best and second-best results within each model size are shown in bold and underlined, respectively.}
  \label{tab:main_results}
  \resizebox{\textwidth}{!}{%
  \begin{tabular}{llcccccc}
  \toprule
  \multirow{2}{*}{Model} & \multirow{2}{*}{Method}
  & \multicolumn{2}{c}{Benchmarks}
  & \multicolumn{4}{c}{Efficiency} \\
  \cmidrule(lr){3-4} \cmidrule(lr){5-8}
  & & AIME26 & ArXiv-Math
  & $\tau$ & Rollout (min) $\downarrow$ & Step (min) $\downarrow$ & Throughput/Speedup \\
  \midrule
  \multirow{5}{*}{Qwen3-1.7B}
  & AR & \textbf{37.2} & \textbf{9.95} & N/A & 10.5 (base) & 13.9 (base) & 2308.6 / $1.00\times$ \\
  & N-gram Lookup & \underline{37.1} & 9.84 & 1.31 & 12.5 ($+19.1\%$) & 19.0 ($+36.3\%$) & 1671.0 / $0.72\times$ \\
  & History Retrieval & 36.0 & 9.77 & \underline{2.03} & 10.4 ($-1.7\%$) & 13.3 ($-4.2\%$) & 2412.7 / $1.05\times$ \\
  & Auxiliary Proposer & 36.7 & 9.87 & 1.89 & \underline{9.1 ($-13.9\%$)} & \underline{12.4 ($-10.5\%$)} & \underline{2562.0 / $1.11\times$} \\
  & \textbf{Ours} & \underline{37.1} & \underline{9.89} & \textbf{5.59} & \textbf{6.5 (-38.6\%)} & \textbf{9.9 (-28.5\%)} & \textbf{3213.8 / 1.39$\times$} \\

  \midrule
  \multirow{5}{*}{Qwen3-8B}
  & AR & 65.42 & 15.70 & N/A & 36.1 (base) & 50.2 (base) & 964.8 / $1.00\times$ \\
  & N-gram Lookup & 63.96 & 16.10 & 1.29 & 38.8 ($+7.5\%$) & 55.0 ($+9.6\%$) & 876.5 / $0.91\times$ \\
  & History Retrieval & \underline{66.04} & \underline{16.38} & \underline{2.04} & 36.1 ($0.0\%$) & 49.8 ($-0.8\%$) & 959.3 / $0.99\times$ \\
  & Auxiliary Proposer & \textbf{66.25} & \textbf{16.82} & 1.86 & \underline{31.6 ($-12.5\%$)} & \underline{45.6 ($-9.2\%$)} & \underline{1055.5 / $1.09\times$} \\
  & \textbf{Ours} & 65.42 & 16.28 & \textbf{5.47} & \textbf{24.1 (-33.2\%)} & \textbf{37.4 (-25.5\%)} & \textbf{1302.6 / 1.35$\times$} \\
  \bottomrule
  \end{tabular}
  }
  \end{table}

Table~\ref{tab:main_results} shows that \modelname achieves the highest
end-to-end training throughput among the compared methods while maintaining
comparable task performance. Specifically, it reaches $1.39\times$ and
$1.35\times$ the throughput of AR on Qwen3-1.7B and Qwen3-8B, respectively.
In contrast, N-gram Lookup reduces training throughput because its low block
efficiency ($\tau\approx1.3$) limits the amortization of the verification cost.
Although History Retrieval improves block efficiency, it provides little
overall acceleration. We hypothesize that, at a decoding temperature of 1.0, limited
reuse of reasoning continuations constrains both lookup-based methods, while
history maintenance introduces additional overhead. The frozen Auxiliary
Proposer provides modest throughput gains, but its declining acceptance rate
(Figure~\ref{fig:rollout-entropy-acceptance-8b}) suggests a growing mismatch
between the frozen proposer and the evolving policy. Nevertheless, the
Auxiliary Proposer achieves higher throughput than History Retrieval despite
having lower block efficiency, indicating that overall throughput depends on
both block efficiency and proposal-generation and runtime costs. \modelname
maintains high block efficiency ($\tau\approx5.5$) at $s=0.05$ by using
proposals from the current policy, without requiring historical coverage or
separate drafter adaptation. This high block efficiency amortizes the cost of
dense verification, while sparse drafting reduces KV reads in our
attention-dominated workloads. Finally, the reward and response-length
trajectories of \modelname closely match those of AR
(Appendix~\ref{app:training-dynamics}), which is consistent with preserving
the dense response-group distribution used for learning.

\subsection{Stabilizing Training under Aggressive Sparsity}
\label{sec:stability}
% Allow the stability figure to float so its discussion can fill the preceding page.
\begin{figure}[!htbp]
  \centering
  \begin{subfigure}[t]{0.48\linewidth}
    \centering
    \includegraphics[width=\linewidth]{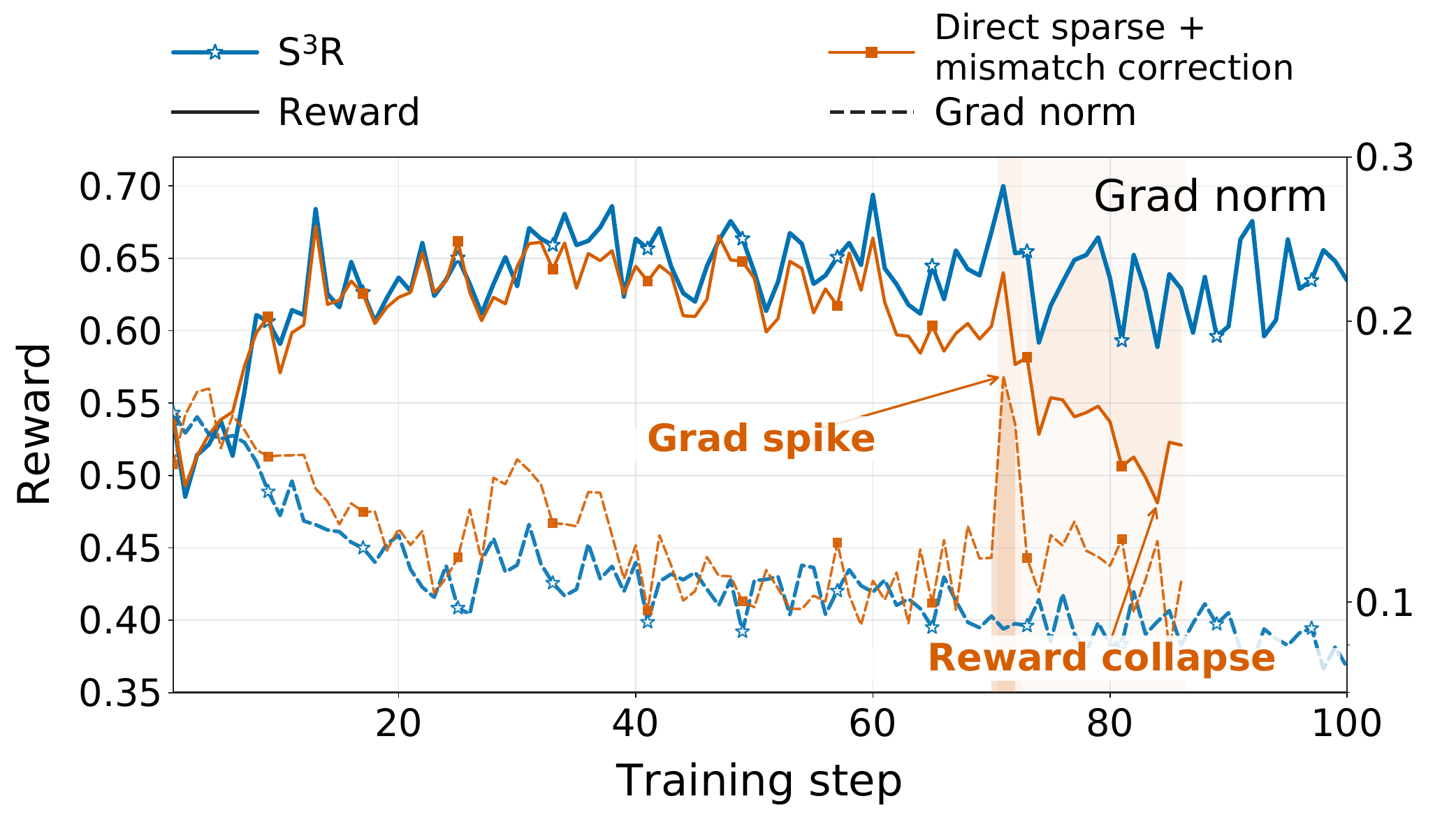}
    \caption{Verification versus mismatch correction.}\label{fig:stability-verification}
  \end{subfigure}\hfill
  \begin{subfigure}[t]{0.48\linewidth}
    \centering
    \includegraphics[width=\linewidth]{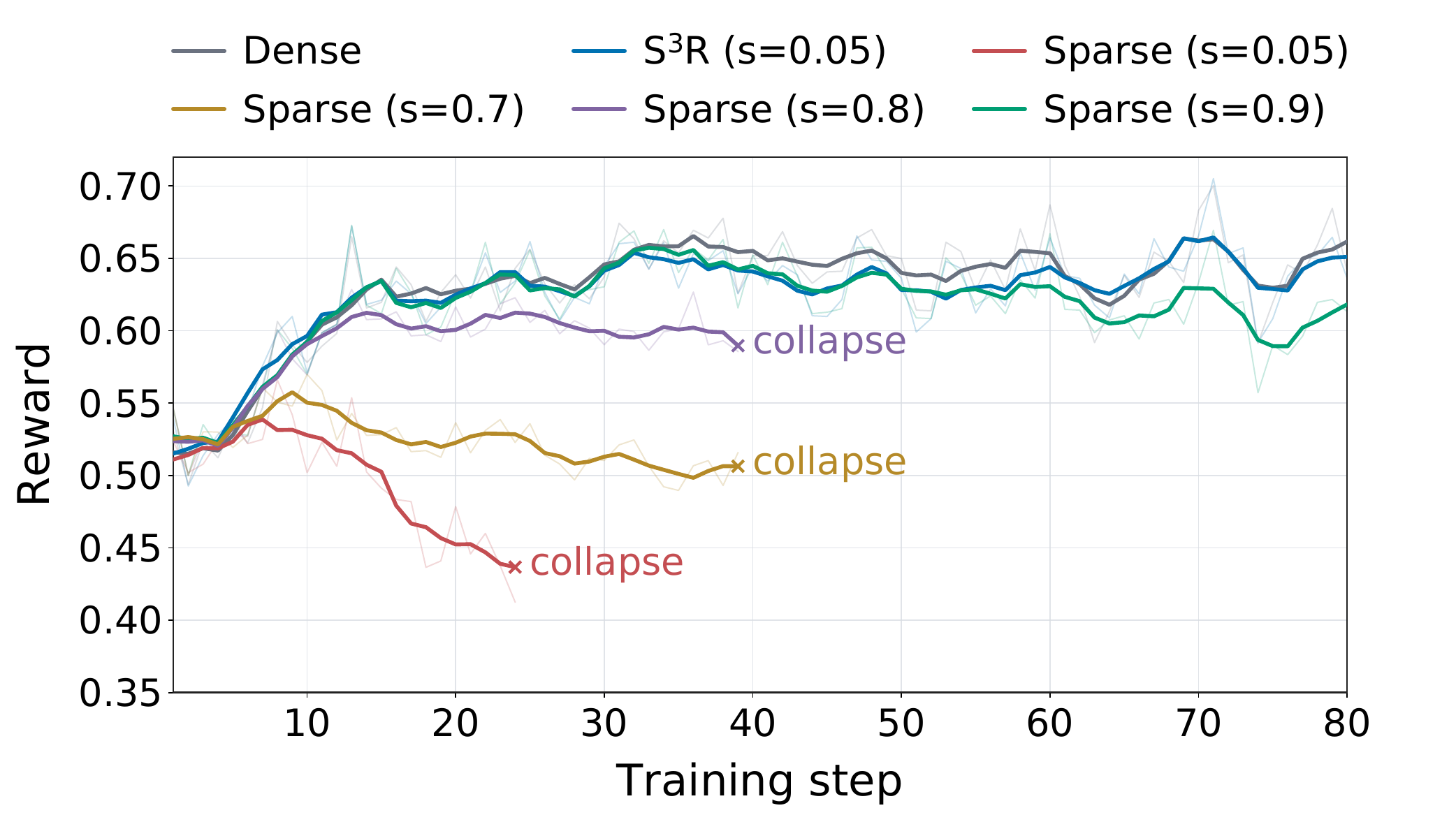}
    \caption{Stability across sparse budgets.}\label{fig:stability-budget}
  \end{subfigure}
  \caption{Training stability under sparse rollout variants.
  (\subref{fig:stability-verification}) Reward and gradient norm for \modelname and Direct sparse + mismatch correction;
  (\subref{fig:stability-budget}) reward across sparse budgets.}
  \label{fig:training_stability_ablation}
\end{figure}

\paragraph{Gradient instability despite training-side correction.}
Figure~\ref{fig:stability-verification} compares \modelname with
\textit{Direct sparse + mismatch correction}, which removes speculative
verification and applies sequence-level filtering and tokenwise importance
reweighting to sparse rollouts (following~\citet{DBLP:conf/acl/LuoZHZWSSLZ26}).
Both use $s=0.05$ with matched model, data, optimizer, and batching settings.
Although their early rewards are similar, the corrected sparse variant
already exhibits larger gradient-norm fluctuations, followed by a sharp
spike around step 70 and further reward deterioration.
\modelname avoids this spike-and-collapse pattern throughout the 100-step run.
Similar mean rewards can therefore mask differences in update stability.
This failure is consistent with the concern in
Section~\ref{sec:preliminary-learning-implications}: correcting a response's
likelihood does not generally restore the reward comparisons within its
GRPO group. The comparison supports addressing this mismatch at sampling
time, before rewards define the group-relative advantages.

\Needspace{4\baselineskip}
\paragraph{Verification enables stable training at aggressive sparsity.}
Figure~\ref{fig:stability-budget} compares Dense and \modelname with direct
sparse rollouts that use neither speculative verification nor training-side mismatch correction.
Direct sparse rollouts collapse early at $s=0.05$ and remain unstable at
$s=0.7$ and $s=0.8$. Only $s=0.9$ sustains relatively stable learning
over the displayed window, indicating that stability constrains how
aggressively these direct rollouts can reduce KV reads.
This sensitivity agrees qualitatively with
\citet{DBLP:journals/corr/abs-2606-08446}, who link stable sparse-rollout
RL to sufficient lower-tail alignment with the dense policy.
% The cited Figure 4 controls fifth-percentile acceptance through dynamic
% budget schedules and uses TIS by default. Its alignment targets are not
% retained-context ratios; the citation supports the qualitative finding.
\modelname follows the Dense reward trajectory at $s=0.05$.
Here, sparsity is confined to proposals, with full-attention verification and
residual sampling preserving the dense rollout distribution.
Together with Proposition~\ref{prop:ssrl-exactness}, these results support
selecting the drafting budget for efficiency without introducing
sparsity-induced changes to the learning signal.

\subsection{Analysis of RL Rollout Efficiency}
\label{sec:rollout-acceleration-analysis}

\begin{figure}[H]
  \centering
  \begin{subfigure}[t]{0.328\linewidth}
    \centering
    \includegraphics[width=\linewidth]{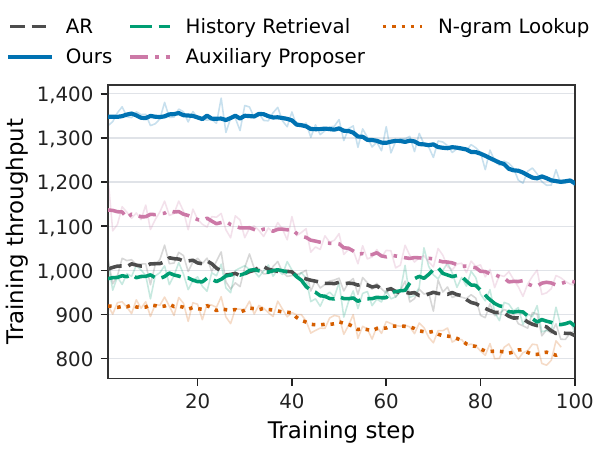}
    \caption{End-to-end training throughput.}\label{fig:rollout-throughput-8b}
  \end{subfigure}\hfill
  \begin{subfigure}[t]{0.328\linewidth}
    \centering
    \includegraphics[width=\linewidth]{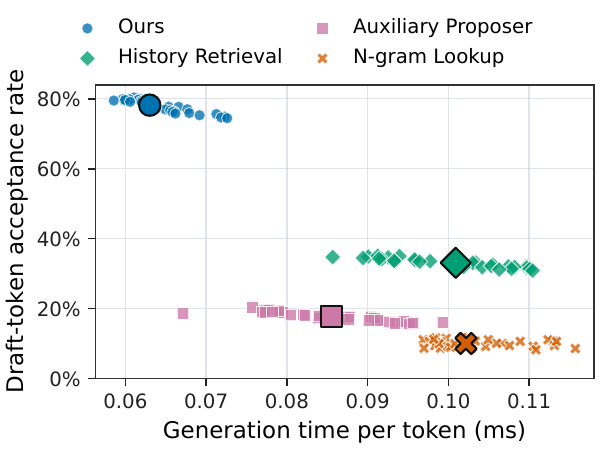}
    \caption{Timing--acceptance trade-off.}\label{fig:rollout-timing-acceptance-8b}
  \end{subfigure}\hfill
  \begin{subfigure}[t]{0.328\linewidth}
    \centering
    \includegraphics[width=\linewidth]{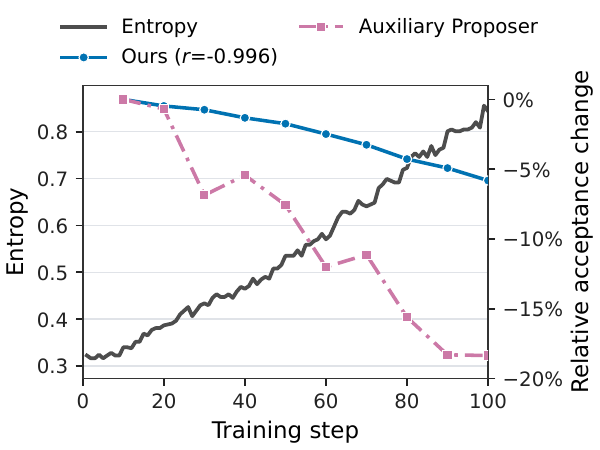}
    \caption{Entropy and acceptance drift.}\label{fig:rollout-entropy-acceptance-8b}
  \end{subfigure}
  \caption{Qwen3-8B rollout diagnostics during RL training.
  (\subref{fig:rollout-throughput-8b}) End-to-end training throughput.
  (\subref{fig:rollout-timing-acceptance-8b}) Generation time per processed
  token (prompt and response) versus draft-token acceptance; outlined
  markers denote method medians.
  (\subref{fig:rollout-entropy-acceptance-8b}) Policy entropy and relative
  change in draft-token acceptance.}
  \label{fig:rollout_acceleration_discussion}
\end{figure}

\paragraph{Sustained efficiency during RL training.}
Figure~\ref{fig:rollout-throughput-8b} shows that \modelname maintains 
higher end-to-end training throughput than the evaluated baselines as
training progresses, including as Qwen3-8B responses grow longer
(Appendix~\ref{app:training-dynamics}).
In Figure~\ref{fig:rollout-timing-acceptance-8b}, \modelname combines 78.2\% median draft-token 
acceptance with the lowest median generation time, 0.0630 ms per processed token, 
among the evaluated speculative methods.
History Retrieval attains higher acceptance than Auxiliary Proposer but
incurs greater generation cost, showing why acceptance alone cannot rank
acceleration methods. 
In long-context RLVR, each accepted block must repay the cost of
drafting, verification, and state management
(Section~\ref{sec:efficiency-runtime}). \modelname benefits from reducing
historical KV reads during drafting in the attention-dominated regime
identified in Figure~\ref{fig:sys-profiling-attention}, while retaining
enough accepted progress to amortize dense verification.
The Qwen3-1.7B results exhibit the same pattern in Appendix~\ref{app:rollout-acceleration-1-7b}.

\paragraph{Proposal alignment as the policy evolves.}
Figure~\ref{fig:rollout-entropy-acceptance-8b} shows a smaller 
relative decline in block efficiency for \modelname than for the 
frozen Auxiliary Proposer, alongside rising policy entropy. 
On Qwen3-8B, \modelname's overall
acceptance decreases relatively 5\%;
later draft positions exhibit larger declines
(Appendix~\ref{app:rollout-position-acceptance}).
Higher entropy reflects greater uncertainty in token selection 
during RL exploration, while ongoing policy updates can leave a frozen drafter stale.
By sharing the current rollout-policy weights, 
\modelname avoids this parameter lag, although sparse approximation can still affect acceptance.
Together with the checkpoint-portability results in
Appendix~\ref{app:checkpoint-portability}, these findings support \modelname's ability to generate
useful proposals across policy updates without
separate drafter training.

% Keep the budget figure with its discussion and end wrapping before the next subsection.
\begingroup
\setlength{\intextsep}{4pt}
\Needspace{16\baselineskip}
\begin{wrapfigure}{r}{0.35\linewidth}
  \centering
  \captionsetup{font=footnotesize,skip=2pt,justification=raggedright,singlelinecheck=false}
  \includegraphics[width=\linewidth]{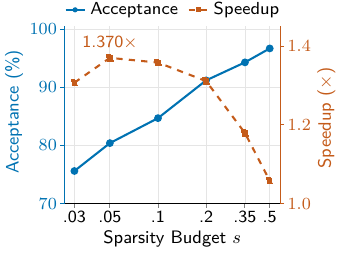}
  \caption{Qwen3-8B budget sweep at a frozen step-60 checkpoint
  ($B=64$, initial context 8K, $k=6$).}
  \label{fig:budget-tradeoff-8b}
\end{wrapfigure}
\paragraph{Balancing draft cost and accepted progress.}
We sweep the sparse drafting budget with full-attention verification enabled, measuring median generation speedup over AR in the same engine (Figure~\ref{fig:budget-tradeoff-8b}). 
Increasing $s$ from 0.05 to 0.50 raises acceptance from 80.4\% to 96.7\%, while speedup falls from 1.370$\times$ to 1.058$\times$.
Retaining more context improves proposal alignment, but the extra KV
reads cost more than the additional accepted tokens save, as anticipated
by Eq.~\ref{eq:efficiency-sd-accounting}.
Reducing $s$ to 0.03 lowers both acceptance and speedup, supporting
our default $s=0.05$ as a practical balance in this workload.
Exact speculative sampling makes this an efficiency trade-off:
the dense response-group distribution is preserved across budgets.
Appendix~\ref{app:why-ssrl-budget} reports sweeps for both model sizes,
and Appendix~\ref{app:why-ssrl-calibration} provides a calibrated cost interpretation.
\par
\WFclear
\endgroup

\subsection{Managing Full-KV State under Memory Pressure}
\label{sec:kv-scheduling-ablation}

\begin{center}
\begin{minipage}{\linewidth}
  \captionof{table}{KV-aware AIMD scheduler ablation on
  Qwen3-1.7B, aggregated over 10 RL steps after warm-up. (a) Generation
  throughput across all methods. (b) KV overhead with fixed concurrency and
  the KV-aware AIMD scheduler, which adaptively adjusts the concurrency limit between 160 and 256.
  All non-AR variants use sparse drafting and full-attention verification.}
  \label{tab:kv_aware_aimd_scheduler_1_7b}
  \centering
  \small
  \setlength{\tabcolsep}{3pt}
  \begin{minipage}[t]{0.47\linewidth}
    \vspace{0pt}
    \centering
    \textbf{(a) Generation throughput}\par\smallskip
    \begin{tabular*}{\linewidth}{@{\extracolsep{\fill}}lrr@{}}
      \toprule
      Method & Tok/s/GPU $\uparrow$ & Gain \\
      \midrule
      AR & 3,989.0 & Ref. \\
      Recompute-256 & 5,894.9 & $+47.8\%$ \\
      Offload-128 & 5,411.8 & $+35.7\%$ \\
      Offload-160 & 5,699.8 & $+42.9\%$ \\
      Offload-256 & 6,014.9 & $+50.8\%$ \\
      \shortstack[l]{\textbf{AIMD scheduler}} & \textbf{6,362.1} & $\mathbf{+59.5\%}$ \\
      \bottomrule
    \end{tabular*}
  \end{minipage}\hfill
  \begin{minipage}[t]{0.50\linewidth}
    \vspace{0pt}
    \centering
    \textbf{(b) KV overhead}\par\smallskip
    \begin{tabular*}{\linewidth}{@{\extracolsep{\fill}}lrr@{}}
      \toprule
      Metric $\downarrow$ & Offload-256 & \shortstack{{AIMD scheduler}} \\
      \midrule
      \shortstack[l]{Peak CPU KV-cache\\occupancy (pages)} & 52,464 & 21,358 \\
      Offload episodes & 1,097 & 59 \\
      Reoffload (\%) & 5.0 & 0.0 \\
      Move/1K (pages) & 18.0 & 1.3 \\
      \bottomrule
    \end{tabular*}
  \end{minipage}

\end{minipage}
\end{center}

Sparse self-drafting reduces KV reads while retaining the complete
historical state required for faithful verification. As long responses
accumulate, this state can exceed GPU capacity, making its management
part of realizing the decoding gain.
In Table~\ref{tab:kv_aware_aimd_scheduler_1_7b}, replacing recomputation
with offloading at a concurrency ceiling of 256 improves generation
throughput by 2.0\%. AIMD adds a further 5.8\% over Offload-256, bringing
the gain over AR to 59.5\%, while reducing transferred pages per 1,000
generated tokens from 18.0 to 1.3.
The throughput improvement alongside reduced KV transfers is consistent
with less KV-cache thrashing as the working set grows.
Offload-128 and Offload-160 both underperform Offload-256: a permanently
conservative limit also sacrifices useful batching when memory is available.
Adapting admission to KV pressure and restoration demand therefore helps
retain rollout parallelism while preserving the full state needed for
dense verification.

\section{Conclusion and Future Work}

We introduced \modelname, which accelerates long-context RLVR through
sparse self-drafting and exact speculative sampling. Under a shared
rollout snapshot, it preserves the dense response-group distribution
without requiring separate drafter training. With KV-aware runtime
support, \modelname achieves $1.39\times$ and $1.35\times$ the end-to-end
training throughput of AR on Qwen3-1.7B and Qwen3-8B, respectively, while
maintaining stable training and comparable task performance. Future
work will explore adaptive sparse budgets and context selection, along
with extensions to MoE and linear-attention models. Another direction
is to coordinate KV management with group completion and
rollout--optimization transitions to reduce overhead while preserving
exact sampling.

\section*{AI use statement}

We used generative AI tools to find related work on sparse RL rollouts and
speculative decoding, and to draft and polish descriptions of \modelname's
sparse self-drafting, dense verification, and runtime support. AI also
assisted with deriving and checking mathematical claims about the limitations
of response-wise importance correction in GRPO and the preservation of dense
response-group distributions under exact speculative sampling, as well as
the latency trade-off between drafting cost and accepted progress. The
authors reviewed all AI-assisted material and take full responsibility for
the paper.

\bibliography{iclr2027_conference}
\bibliographystyle{iclr2027_conference}

\clearpage
\appendix
\section{Policy Mismatch and Distribution Preservation}
\label{app:ssrl-exactness}

This appendix analyzes training-side correction and distribution preservation
under sparse proposals.
Section~\ref{app:ssrl-correction-limits} relates the token-level ratio in
Eq.~(\ref{eq:preliminary-mismatch-decomposition}) to complete-response
importance weights.
Section~\ref{app:ssrl-group-correction} proves
Proposition~\ref{prop:ssrl-response-correction} through a change-of-measure
identity and an explicit GRPO counterexample.
Section~\ref{app:ssrl-sampling-assumptions} proves
Proposition~\ref{prop:ssrl-exactness} from conditional token sampling through
complete responses and independent response groups.

We use the notation in Section~\ref{sec:preliminary-policy-mismatch}
and fix the rollout snapshot $\theta_{\mathrm{old}}$.
For a fixed response index $i$, we write $w_{s,t}$ for the sparsity ratio
$w_s$ evaluated at $(o_t^{i}\mid x,o_{<t}^{i})$.
All policy probabilities include the corresponding sampling transforms.

\subsection{Tokenwise and response-wise importance weights}
\label{app:ssrl-correction-limits}

For a fixed direct-sparse rollout procedure, let
$h_t^i=(x,o_{<t}^i)$ and interpret
$\pi^{\mathrm{sparse}}_{\theta_{\mathrm{old}},s}(\cdot\mid h_t^i)$ as its
prefix-conditional token law. Equation~(\ref{eq:preliminary-mismatch-decomposition})
gives the ratio at one sampled token:
\begin{equation}
 \hat w
 =\underbrace{\frac{\pi_\theta(o_t^i\mid h_t^i)}
                   {\pi^{\mathrm{dense}}_{\theta_{\mathrm{old}}}(o_t^i\mid h_t^i)}}_{w(\theta)}
  \underbrace{\frac{\pi^{\mathrm{dense}}_{\theta_{\mathrm{old}}}(o_t^i\mid h_t^i)}
                   {\pi^{\mathrm{sparse}}_{\theta_{\mathrm{old}},s}(o_t^i\mid h_t^i)}}_{w_{s,t}}.
\end{equation}
The first factor accounts for the learner's change from the dense rollout
snapshot; the second corrects the current token conditional at $h_t^i$.
A single $w_{s,t}$ leaves the sparse distribution of the prefix and
continuation uncorrected.
CTPO addresses prefix state--action mismatch through cumulative token
importance ratios~\citep{DBLP:journals/corr/abs-2605-07331}.
Our analysis additionally examines the dependence of a response's
GRPO advantage on other sampled group members.

Sparse-RL combines response filtering with tokenwise
reweighting~\citep{DBLP:conf/acl/LuoZHZWSSLZ26}, while
Sparrow pairs mismatch-guided sparsity scheduling with truncated importance
sampling~\citep{DBLP:journals/corr/abs-2606-08446}.
For other rollout mismatches, IcePop masks tokens with large
inference--training probability discrepancies~\citep{DBLP:journals/corr/abs-2510-18855},
and AIS adapts importance correction for quantized
rollouts~\citep{DBLP:journals/corr/abs-2605-13907}.
These interventions mitigate mismatch without generally restoring the dense
response-group law. To isolate the role of group-relative advantages in
direct-sparse rollout, we grant each complete response its exact importance
weight.

Let $P$ and $Q_s$ be the dense and direct-sparse
laws defined in Section~\ref{sec:preliminary-learning-implications}.
If $Q_s(o\mid x)>0$ whenever $P(o\mid x)>0$, the chain rule gives the exact
response weight
\begin{equation}
 W_i=\frac{P(o^i\mid x)}{Q_s(o^i\mid x)}
    =\prod_{t=1}^{|o^i|}w_{s,t}.
 \label{eq:ssrl-full-response-weight}
\end{equation}
The standard importance-sampling identity~\citep{agapiou2017importance}
gives $\mathbb E_{Q_s}[W_i f(o^i)]=\mathbb E_P[f(o^i)]$ for any integrable
function $f$ of one complete response. For groups of independently sampled responses,
the joint likelihood ratio is $\prod_{j=1}^{G}W_j$.
Weighting only response $i$'s update by $W_i$ leaves its group members
under $Q_s$, which can bias the GRPO update even with exact response weights
(Section~\ref{app:ssrl-group-correction}). \modelname instead preserves the
dense response-group law before computing rewards and advantages
so that its policy importance ratio remains $w(\theta)$
(Section~\ref{app:ssrl-sampling-assumptions}).

\subsection{Limits of response-wise correction in GRPO}
\label{app:ssrl-group-correction}

\begin{proof}[Proof of Proposition~\ref{prop:ssrl-response-correction}]
Fix the prompt $x$ and learner parameters. Suppressing the conditioning
on $x$, let $P(u)$ be the dense rollout probability of a complete
response $u$, and $Q_s(u)$ the probability obtained by sampling the
sparse drafter directly. The corresponding exact sparse-to-dense weight is
$W(u)=P(u)/Q_s(u)$, with $P(u)>0$ implying $Q_s(u)>0$ and all
expectations finite. Here an update is the
group-averaged surrogate gradient. Both groups use the same GRPO
surrogate referenced to the dense rollout policy; $W$ multiplies each
response gradient as a fixed outer factor.

For two independent responses $U,V$, let $g(u,v)$ be the gradient
contribution of $u$ after its advantage is computed from the pair
$(u,v)$. The group gradient averages $g(u,v)$ and $g(v,u)$.
By exchangeability, its expected value equals the expected contribution
of the first response, also when each contribution is weighted by its
own $W$. Let $H(v)=\sum_u P(u)g(u,v)$ be the expected gradient of a focal
response drawn from the dense law $P$ when its comparison is fixed at
$v$. Write $G_{\mathrm{corr}}$ for the expected gradient after weighting
direct-sparse groups response by response, and $G_{\mathrm{dense}}$ for
the dense-rollout baseline. Then
\begin{equation}
 \begin{aligned}
 G_{\mathrm{corr}}
 &=\sum_{u,v}Q_s(u)Q_s(v)W(u)g(u,v)\\
 &=\sum_{u,v}P(u)Q_s(v)g(u,v)
  =\sum_v Q_s(v)H(v),\\
 G_{\mathrm{dense}}
 &=\sum_{u,v}P(u)P(v)g(u,v)
  =\sum_v P(v)H(v).
 \end{aligned}
 \label{eq:ssrl-app-response-change-measure}
\end{equation}
In the weighted expectation, $W(u)$ replaces the focal factor $Q_s(u)$
with $P(u)$. The focal response is thus accounted for under the dense
rollout law $P$, while its comparison response still follows the
direct-sparse law $Q_s$. For larger groups, the other
$G-1$ responses likewise retain this sparse law. A difference in expected
gradients occurs when $H(v)$ varies with the comparison response and the
shift from the dense law $P$ to the direct-sparse law $Q_s$ does not cancel.

\paragraph{A GRPO counterexample.}
Consider three one-token responses $a,b,c$ with rewards $(0,1,2)$.
In $(a,b,c)$ order, take the dense law $P=(1/3,1/3,1/3)$ and the
direct-sparse law $Q_s=(1/2,1/3,1/6)$, giving exact response weights
$(W(a),W(b),W(c))=(2/3,1,2)$.
For two-response groups, the sample standard deviation of the rewards is
$\sigma(u,v)=|r(u)-r(v)|/\sqrt{2}$. With stabilizer
$\varepsilon>0$, the group-normalized advantage of $u$ is
\begin{equation}
 A(u,v)
 =\frac{r(u)-[r(u)+r(v)]/2}{\sigma(u,v)+\varepsilon}
 =\frac{r(u)-r(v)}{\sqrt{2}|r(u)-r(v)|+2\varepsilon}.
\end{equation}
Take $(\pi_\theta(a),\pi_\theta(b),\pi_\theta(c))
=(1,e^\theta,1)/(2+e^\theta)$ at $\theta=\theta_{\mathrm{old}}=0$.
Then $\pi_0=P$, and the log-probability derivatives
$\left.\partial_\theta\log\pi_\theta(u)\right|_{\theta=0}$ are
$(-1/3,2/3,-1/3)$. The clipped policy ratio is one at this point,
so under the no-KL, no-entropy surrogate used in our experiments the
one-token gradient contribution is
$g(u,v)=A(u,v)\left.\partial_\theta\log\pi_\theta(u)\right|_{\theta=0}$,
holding the sampled advantage fixed.

Let $c_d=d/(\sqrt{2}d+2\varepsilon)$ be the magnitude of the advantage
for a reward gap $d\in\{1,2\}$. Against $a$, responses $b$ and $c$
contribute $2c_1/9$ and $-c_2/9$ to $H(a)$; symmetry then gives
\begin{equation}
 \bigl(H(a),H(b),H(c)\bigr)
 =\frac{2c_1-c_2}{9}(1,0,-1).
\end{equation}
The dense comparison law $P$ is uniform, so its positive and negative
terms cancel. Under the direct-sparse law $Q_s$, the low-reward
comparison $a$ occurs more often than the high-reward comparison $c$,
yielding
\begin{equation}
 \begin{aligned}
 G_{\mathrm{dense}}&=0,\\
 G_{\mathrm{corr}}
 &=(Q_s(a)-Q_s(c))H(a)
  =\frac{2c_1-c_2}{27}
  =\frac{\sqrt{2}}{27(\sqrt{2}+2\varepsilon)
                        (\sqrt{2}+\varepsilon)}>0.
 \end{aligned}
 \label{eq:ssrl-app-counterexample-update}
\end{equation}
Even though $W(b)=1$, its expected contribution changes when the
comparison response is drawn from the direct-sparse law $Q_s$ rather
than the dense law $P$. Thus exact response-wise weights do not
generally recover the dense expected gradient.
\end{proof}

Here $Q_s$ is the complete-response law of direct sparse rollout. In
\modelname, $q_s$ supplies token proposals, while the completed responses
follow $P$, as proved next.

\subsection{Dense response-group preservation under adaptive sparse proposals}
\label{app:ssrl-sampling-assumptions}

\begin{proof}[Proof of Proposition~\ref{prop:ssrl-exactness}]
Fix a committed prefix $(x,o_{<t}^{i})$ and any drafter state determined by
earlier sampling and verification. Let $p(v)$ be the dense verifier's
probability of token $v$ at this prefix, computed with dense historical KV,
and let $q_s(v)$ be the proposal probability recorded when the candidate is
sampled. Earlier acceptance decisions do not use later candidates, so
conditioning on their acceptance leaves the next candidate distributed
according to its recorded $q_s$ at the committed prefix.

The candidate is accepted with probability $\min\{1,p(v)/q_s(v)\}$;
on rejection, the discarded suffix is replaced by a draw from the
residual distribution in Eq.~(\ref{eq:ssrl-residual-sampling}). The total
rejection probability is $Z=\sum_u[p(u)-q_s(u)]_+$, so for $Z>0$ the next
committed token has probability
\begin{equation}
 \min\{p(v),q_s(v)\}+Zr_s(v)
 =\min\{p(v),q_s(v)\}+[p(v)-q_s(v)]_+
 =p(v).
 \label{eq:ssrl-app-one-token-exactness}
\end{equation}
If $Z=0$, then $p=q_s$ and every candidate is accepted. The residual
also covers tokens with $q_s(v)=0$ when $p(v)>0$.

The identity holds conditional on every preceding drafter state, including
states selected using completed verification. Dense prefill and the bonus
token after a fully accepted block also follow $p$; rejected suffixes are
rolled back before subsequent verification. Thus every committed token has
the dense conditional distribution, even as sparse proposals adapt.
With the same termination rules, the chain rule gives
\begin{equation}
 \Pr_{\text{\modelname},s}(o^{i}\mid x)
 =\prod_{t=1}^{|o^{i}|}
   \pi^{\mathrm{dense}}_{\theta_{\mathrm{old}}}(o_t^{i}\mid x,o_{<t}^{i})
 =P(o^{i}\mid x).
 \label{eq:ssrl-app-trajectory-exactness}
\end{equation}
All requested responses retain their group identities and use independent
randomness. Their joint law is therefore $\prod_{i=1}^{G}P(o^{i}\mid x)$,
as in Eq.~(\ref{eq:ssrl-group-distribution}). Applying the same reward,
advantage, and training operations to these groups produces gradient
estimates with the same distribution as dense AR rollouts.
\end{proof}

For direct sparse rollout, Eq.~(\ref{eq:preliminary-mismatch-decomposition})
gives the training ratio $\hat w=w(\theta)w_s$. For \modelname,
Eq.~(\ref{eq:ssrl-app-one-token-exactness}) instead makes the committed
token's behavior probability $p(v)$, so at $v=o_t^{i}$ the ratio is
\begin{equation}
 \begin{aligned}
 \frac{\pi_\theta(v\mid x,o_{<t}^{i})}
      {\Pr_{\text{\modelname},s}(v\mid x,o_{<t}^{i})}
 &=\frac{\pi_\theta(v\mid x,o_{<t}^{i})}
         {\pi^{\mathrm{dense}}_{\theta_{\mathrm{old}}}(v\mid x,o_{<t}^{i})}\\
 &=w(\theta).
 \end{aligned}
\end{equation}
Thus the extra $w_s$ factor in Eq.~(\ref{eq:preliminary-mismatch-decomposition})
does not enter the training update: $q_s$ supplies proposals, while the
committed tokens follow $p$.

\section{Implementation Details of \modelname}
\label{app:ssrl-implementation}

This appendix details the sparse drafting, full-attention verification,
and supporting RL execution design in Section~\ref{sec:methodology}.
Our rollout backend inherits the open-source implementation of
\citet{DBLP:journals/corr/abs-2512-01278} under 
veRL~\citep{DBLP:conf/eurosys/ShengZYWZZPL025} framework.
We first specify sparse-context updates and verification state, then
describe request scheduling, and finally connect their execution to
the RL training loop in Algorithm~\ref{alg:ssrl-rollout}.

\subsection{Sparse-context construction and verification state}
\label{app:ssrl-sparse-context}

\paragraph{Budget and recent positions.}
For each layer $\ell$, let $\mathcal C_\ell$ denote the KV indices
retained after exact sampling: the pre-draft context and the accepted
draft prefix; KV-head indices remain implicit. The implementation applies
a minimum budget $K_{\min}$ and caps the ratio-based budget by the
available context length. The number of positions retained at context
refresh is
\begin{equation}
 K_\ell=\min\bigl\{|\mathcal C_\ell|,
 \max\{K_{\min},\lceil s|\mathcal C_\ell|\rceil\}\bigr\}.
 \label{eq:ssrl-sparse-budget}
\end{equation}
If the preceding verification commits $n$ output tokens, including its
correction or bonus token, $\mathcal R_\ell$ contains the last
$\min\{K_\ell,n+1\}$ valid cached positions around the verification boundary.
The newly sampled correction or bonus token has no KV entry at this
boundary; its KV is computed by subsequent execution.
Each layer and KV head independently selects the remaining
$K_\ell-|\mathcal R_\ell|$ positions from
$\mathcal C_\ell\setminus\mathcal R_\ell$ using the verifier scores
$\mathbf a_\ell$. This is Eq.~(\ref{eq:ssrl-draft-selection}) with
$K_\ell$ replacing $\lceil s|\mathcal C_\ell|\rceil$.
The selected historical positions remain fixed within a draft block,
while $\mathcal I_\ell$ is extended with
newly computed KV entries.
Thus $K_\ell$ specifies the budget at refresh; the attended context grows
with subsequent draft steps.

\paragraph{Attention operators.}
Each draft step applies attention to the selected KV entries:
\begin{equation}
 \operatorname{SparseAttn}(\mathbf q;\mathcal I)
 =\operatorname{softmax}\!\left(
 \frac{\mathbf q\mathbf K_{\mathcal I}^{\top}}{\sqrt d}
 \right)\mathbf V_{\mathcal I},
 \label{eq:ssrl-sparse-attention}
\end{equation}
where $\mathbf q\in\mathbb R^{1\times d}$ is the current query and $d$
the head dimension. For the verifier's query matrix $\mathbf Q$,
full attention is
\begin{equation}
 \operatorname{FullAttn}(\mathbf Q)
 =\operatorname{softmax}_{\mathrm{causal}}\!\left(
 \frac{\mathbf Q\mathbf K^{\top}}{\sqrt d}
 \right)\mathbf V,
 \label{eq:ssrl-full-attention}
\end{equation}
where $\mathbf K$ and $\mathbf V$ cover the complete context and
the proposed block.

\paragraph{Attention-score reduction.}
For each layer and KV head, $\mathbf a_\ell$ assigns each historical
position the mean of its normalized attention weights over all query
positions in the verified block and the query heads sharing that KV head.
Queries for candidate tokens later rejected by exact sampling still
contribute to this mean. A separate GPU kernel recomputes the necessary
query--key scores using the verifier's saved softmax normalizers and stores
only the reduced per-position scores. GPU Top-K consumes these scores;
neither step materializes the full attention matrix or requires another
Transformer forward.

\paragraph{Shared KV and proposal state.}
Sparse and full attention address the same KV storage; the sparse
selection is an index view into the complete cache. Verification
overwrites provisional draft KV entries with dense states. Once exact
sampling determines the accepted draft prefix, KV entries for the
unaccepted candidate suffix are discarded. For the current unverified
block, each request also retains the full-vocabulary proposal distributions
on the GPU. These distributions and the verifier's target distributions supply
acceptance and residual sampling
(Appendix~\ref{app:ssrl-sampling-assumptions}). The proposal buffer is
reused after verification and released when the request completes.

\subsection{KV-aware AIMD scheduler and mixed-stage execution}
\label{app:ssrl-scheduling}

\paragraph{Concurrency update rule.}
At scheduler iteration $j$, the request-concurrency limit $C_j$ determines
whether additional requests can join the active batch of size $B$. Let $u_j$ denote GPU
KV-cache utilization and $O_j$ the number of paused requests awaiting
KV restoration from CPU.
The backlog captures outstanding demand that GPU utilization alone
misses after caches move to CPU. The KV-aware AIMD scheduler updates $C_j$ using
additive increase and multiplicative decrease:
% Preserve the OR condition and the low-utilization branch priority.
\begin{equation}
C_{j+1}=\operatorname{clip}_{[C_{\min},C_{\max}]}
\left(
\begin{cases}
C_j+\delta, & u_j<u_{\mathrm{low}},\\
\lfloor\beta C_j\rfloor,
  & u_j>u_{\mathrm{high}}\ \vee \ O_j>O_{\mathrm{th}},\\
C_j, & \text{otherwise}.
\end{cases}
\right)
\label{eq:ssrl-aimd}
\end{equation}
Here $\delta$ is a positive integer increment, $\beta\in(0,1)$ the
reduction factor, and $C_{\min},C_{\max}$ the concurrency bounds, with
utilization thresholds $u_{\mathrm{low}}<u_{\mathrm{high}}$.
The branches are evaluated in order: low utilization permits additive
growth; otherwise, high utilization or an excessive backlog triggers
multiplicative backoff. If $B$ exceeds a reduced $C_j$, additional requests wait
for $B$ to fall below the new limit, while already active requests
continue subject to KV capacity.

\paragraph{Request scheduling and KV capacity.}
ShadowKV combines low-rank key storage, value offloading, and sparse KV
reconstruction to increase long-context inference
throughput~\citep{DBLP:conf/icml/SunCB0Z0DCC25}.
Here, offloading preserves complete dense KV entries for subsequent
full-attention verification.
The KV-aware AIMD scheduler operates over a KV offload/restore mechanism.
Each rollout replica maintains pending and active requests, plus a backlog
of paused requests awaiting KV restoration from CPU.
The concurrency limit $C_j$ determines when additional requests can join the active batch, while a separate capacity
check accounts for the KV storage required by the next execution batch.
When resident KV state leaves insufficient space for execution, the
scheduler prioritizes eligible requests with the smallest current KV-cache
length, pauses them, and transfers KV pages to CPU memory. 
Token history, proposal state, and execution stage remain
available for resumption. A paused request rejoins the active set after its KV
state has been restored, subject to capacity and the concurrency limit.
Restoring paused requests takes priority over starting new requests. Completed requests release their
resources. The scheduler observes GPU KV utilization and the KV-restoration
backlog and applies Eq.~(\ref{eq:ssrl-aimd}) to set the next iteration's
concurrency limit. A reduction of this limit delays request activation while
already active requests continue, subject to KV-capacity checks.

\paragraph{Mixed execution.}
We use mixed execution of prefill, draft, and verification requests.
The active requests can occupy different stages. A prefill request
contributes a prompt chunk, a drafting request contributes one query,
and a verifying request contributes its candidate block and preceding
committed token. These queries are packed into one model forward, with
sparse attention for drafting and full attention for prefill and
verification. Consequently, the active request count differs from the
query count processed by the model. This execution implements the phase
mixing analyzed in Appendix~\ref{app:why-ssrl-cost}.
CPU scheduling overlaps GPU execution; each request starts its next draft
block once verification and sparse-index updates are complete.

\subsection{Execution in the RL training loop}
\label{app:ssrl-rl-integration}

At each RL step, the actor's parameters are synchronized to
the rollout replicas, and this snapshot supplies both drafting and
verification throughout generation. After complete response groups return
to the trainer with their prompt and group identities, rollout resources
are released for the actor update.

\paragraph{Resource lifetimes across RL steps.}
\label{app:ssrl-state-lifetimes}
Table~\ref{tab:ssrl-state-lifetimes} tracks the transition from rollout
generation to actor optimization and back. CUDA graphs use predefined
batch-size buckets. After KV resources are released, captures tied to
their storage are renewed for these buckets before the next rollout,
allowing graph replay for the covered batch sizes.

\begin{table}[htbp]
\centering
\small
\caption{Resource lifetimes across the rollout and actor-update phases.}
\label{tab:ssrl-state-lifetimes}
\begin{tabular}{@{}p{0.20\linewidth}p{0.35\linewidth}p{0.39\linewidth}@{}}
\toprule
State & Before actor update & Before the next rollout \\
\midrule
Requests and KV & Release request state and KV resources.
& Start new requests with KV capacity fixed for the rollout. \\
Rollout model & Retain parameter buffers on GPU.
& Synchronize the updated actor snapshot into these buffers. \\
CUDA graphs & Discard captures tied to released KV storage; retain the planned batch sizes.
& Recapture the planned sizes for graph replay during generation. \\
\bottomrule
\end{tabular}
\end{table}

Algorithm~\ref{alg:ssrl-rollout} combines this resource lifecycle with the
per-request drafting and verification steps. Here $\mathcal B_j$ is the
scheduled batch, $m$ counts proposals in the current draft block, and
$o^{i}$ is the committed response prefix for request $(x,i)$.
The symbols $\mathcal I$, $\mathcal R$, and $\mathbf a$ denote the
layer- and KV-head-specific state defined above.
\textsc{KVSchedule} and \textsc{MixedForward} implement
Appendix~\ref{app:ssrl-scheduling}; \textsc{ExactSample} returns the
committed tokens under Appendix~\ref{app:ssrl-sampling-assumptions}.

\begin{algorithm}[h]
\caption{\modelname Pipeline}
\label{alg:ssrl-rollout}
\small
\algrenewcommand\algorithmicrequire{\textbf{Input:}}
\algtext*{EndFor}
\algtext*{EndIf}
\begin{algorithmic}[1]
\Require Policy $\theta$, training prompts
\Statex \hspace*{-\algorithmicindent}\textbf{Parameters:} group size $G$, draft length $k$, sparse budget $(s,K_{\min})$;
  KV-aware AIMD scheduler parameters in Eq.~(\ref{eq:ssrl-aimd})
\For{each RL step}
  \State $\theta_{\mathrm{old}}\gets\theta$; synchronize rollout replicas
  \State Prepare fixed-capacity KV storage and CUDA graphs for planned batch sizes
  \State Initialize $G$ requests per prompt; distribute across replicas
  \ForAll{rollout replicas in parallel}
    \For{scheduler iterations $j=0,1,\ldots$ until rollout completes}
      \State $\mathcal B_j\gets\textsc{KVSchedule}(C_j)$ \Comment{Capacity, offload, restore, start requests}
      \State $(q,p,\mathbf a)\gets\textsc{MixedForward}_{\theta_{\mathrm{old}}}(\mathcal B_j;\mathcal I)$
        \Comment{Eqs.~(\ref{eq:ssrl-sparse-attention})--(\ref{eq:ssrl-full-attention})}
      \ForAll{requests $(x,i)$ in $\mathcal B_j$}
        \If{prefill completes}
          \State $o_1^{i}\sim p;\quad o^{i}\gets[o_1^{i}]$
          \State $\mathcal I\gets\{1,\ldots,S\};\quad m\gets0$
        \ElsIf{a draft step completes}
          \State $m\gets m+1;\quad \tilde o_m^{i}\sim q_m$ \Comment{Verify when $m=k$}
        \ElsIf{verification completes}
          \State $\Delta o^{i}\gets\textsc{ExactSample}(\tilde o_{1:k}^{i},q_{1:k},p_{1:k+1})$
          \State $o^{i}\gets[o^{i};\Delta o^{i}]$; discard unaccepted draft KV
          \State $S\gets\text{valid KV length};\quad K\gets\min\{S,\max\{K_{\min},\lceil sS\rceil\}\}$
          \State $\mathcal R\gets\{S-\min(K,|\Delta o^{i}|+1)+1,\ldots,S\}$
          \State $\mathcal I\gets\mathcal R\cup\operatorname{TopK}_{K-|\mathcal R|}(\mathbf a);\quad m\gets0$
            \Comment{Eq.~(\ref{eq:ssrl-draft-selection})}
        \EndIf
        \State Collect completed responses; release their state
      \EndFor
      \State Observe $(u_j,O_j)$; update $C_{j+1}$ by Eq.~(\ref{eq:ssrl-aimd})
    \EndFor
  \EndFor
  \State Release request and KV resources for actor optimization
  \State Compute rewards and group-relative advantages from complete response groups
  \State Update $\theta$ using the policy objective and these advantages
\EndFor
\end{algorithmic}
\end{algorithm}

\section{Detailed Experimental Setup}
\label{app:experimental-setup}
% Experimental protocol confirmed by the authors on 2026-09-14.
% Historical launch scripts and figure-preparation notes are not configuration sources.

This appendix specifies the training, decoding, and measurement settings
for Section~\ref{sec:experiments}, together with the baseline implementations
and ablation configurations.

\subsection{Infrastructure, Models, and Data}

\paragraph{Infrastructure.}
We run the main training experiments on a single node with eight NVIDIA
H100 GPUs, 64 CPU cores, and 1\,TB of host memory. We use
veRL~\citep{DBLP:conf/eurosys/ShengZYWZZPL025} v0.7.0 for training and
vLLM~\citep{DBLP:conf/sosp/KwonLZ0ZY0ZS23} v0.12.0 for the baseline
rollout engines. Actor training uses
Fully Sharded Data Parallel (FSDP). Rollout generation uses four replicas,
each with tensor parallelism of two (TP2/DP4). All methods share the GPU
allocation, KV-memory budget, and a maximum decoding batch size of 256
requests per replica. CUDA graphs, decoding warm-up, prefix caching, and
chunked prefill are enabled for every method.

\paragraph{Models and training data.}
We initialize the target policies from Qwen3-1.7B and
Qwen3-8B~\citep{DBLP:journals/corr/abs-2505-09388}, with thinking enabled
and the default Qwen3 chat template. We use the original
DAPO-MATH-17k dataset~\citep{DBLP:conf/nips/YuZZYZYDFLLLLLL25} without
additional filtering and shuffle it with a fixed random seed. Answer
extraction follows DeepScaleR~\citep{deepscaler2025}. Responses that
exceed the length limit, are truncated, or contain no valid extracted
answer are scored as incorrect.

\subsection{Training and Method Configuration}

\paragraph{RL training.}
We use the DAPO variant of GRPO with 256 prompts per training batch and
eight responses per prompt, yielding 2,048 responses per step. Sampling
and group filtering are disabled, so the generated batch is also the
batch used for the actor update. The mini-batch size is 32, with dynamic
batching enabled. We use AdamW with a learning rate of $10^{-5}$ and
DAPO's asymmetric clipping parameters
$(\epsilon_{\mathrm{low}},\epsilon_{\mathrm{high}})=(0.2,0.28)$.
The overlong penalty uses a 4K-token window. We use no KL loss, entropy
bonus, or additional importance correction in the main comparison.

\paragraph{Sampling and evaluation.}
Training rollouts use temperature 1.0 without top-$p$ or top-$k$ filtering.
The maximum response lengths are 16K tokens for Qwen3-1.7B and 24K tokens
for Qwen3-8B. For evaluation, we use temperature 1.0 and top-$p=0.7$.
We report AIME26 and ArXiv-Math pass@1 scores from the best checkpoint,
estimated using 16 sampled responses per problem as detailed in
Appendix~\ref{app:experimental-metrics}.

\paragraph{\modelname.}
We run \modelname as the rollout backend in veRL
(Appendix~\ref{app:ssrl-rl-integration}).
Both target models use a draft length of $k=6$ and a sparse budget ratio
of $s=0.05$. The draft length counts only proposed tokens; the correction
or bonus token is additional. The sparse KV budget is computed from the
current context length, with a minimum budget of 192 tokens, following
Appendix~\ref{app:ssrl-sparse-context}. The KV-aware AIMD scheduler in
Appendix~\ref{app:ssrl-scheduling} uses
$(C_{\min},C_{\max})=(160,256)$, $\delta=5$, $\beta=0.7$,
$(u_{\mathrm{low}},u_{\mathrm{high}})=(0.8,0.95)$, and
$O_{\mathrm{th}}=3$. Concurrency limits apply to each rollout replica.

\subsection{Measurement and Reporting}
\label{app:experimental-metrics}

\paragraph{Evaluation datasets.}
We use the MathArena releases~\citep{dekoninck2026matharena} of both
evaluation benchmarks. AIME26 contains all 30 problems from the 2026
American Invitational Mathematics Examination, distributed as the
\texttt{train} split of \texttt{MathArena/aime\_2026}.\footnote{AIME26:
\url{https://huggingface.co/datasets/MathArena/aime_2026}.}
ArXiv-Math denotes the 103-problem ArXivMath collection covering
December 2025 through March 2026, distributed as the \texttt{train} split
of \texttt{MathArena/arxivmath}.\footnote{ArXivMath:
\url{https://huggingface.co/datasets/MathArena/arxivmath}.}
This collection contains 17, 23, 32, and 31 problems from December,
January, February, and March, respectively. We evaluate on all problems
in each benchmark.

\paragraph{Evaluation sampling and aggregation.}
For each evaluated checkpoint, we run one evaluation per benchmark and
sample 16 responses per problem at temperature 1.0 and top-$p=0.7$.
This yields 480 responses on AIME26 and 1,648 on ArXiv-Math per checkpoint.
Let $N$ be the number of problems in a benchmark and $c_i$ the number of
correct responses among the 16 samples for problem $i$. The reported
pass@1 score is
\begin{equation}
  \widehat{\mathrm{pass@1}}
  = \frac{100}{N}\sum_{i=1}^{N}\frac{c_i}{16}\quad (\%).
  \label{eq:eval-pass-at-one}
\end{equation}
Thus, we average the sampled correctness rate for each problem across
the benchmark, giving every problem equal weight. Each benchmark is
aggregated separately.

\paragraph{Reporting window.}
The rollout time, training-step time, throughput, and block efficiency in
Table~\ref{tab:main_results} cover the full training run, excluding the
first step for warm-up. All methods use the same reporting window within
each model size.

\paragraph{Timing and throughput.}
Rollout time measures response generation. Training-step time measures
the complete step, including reward evaluation, log-probability and
advantage computation, actor updates, weight synchronization, and
method-specific work such as historical-response cache updates.
For each step, training throughput is
$N_{\mathrm{tok}}/(8T_{\mathrm{step}})$ in tokens/s/GPU, where
$N_{\mathrm{tok}}$ counts non-padding prompt and response tokens in the
batch. We report the arithmetic mean of these per-step throughputs.
Generation throughput in the KV-management ablation uses rollout time
in the denominator with the same token and GPU accounting. Throughput
speedups are ratios of the corresponding means relative to AR.

\paragraph{Scope of performance attribution.}
Table~\ref{tab:main_results} evaluates the complete RL system against
the specified baseline execution stacks. The lifecycle mechanisms in
Appendix~\ref{app:ssrl-state-lifetimes} support repeated colocated
execution, but their individual speed contributions are not isolated.
Table~\ref{tab:kv_aware_aimd_scheduler_1_7b} compares KV management and request scheduling
within the sparse-speculative stack; it is not an ablation of the
cross-step release and restoration paths.

\paragraph{Speculative decoding and KV diagnostics.}
Block efficiency $\tau$ is measured by directly counting the output
tokens emitted per speculative round, including the correction or bonus
token, and averaging over rounds in the reporting window. We also report
draft-token acceptance rates. The KV-management ablation records peak
CPU KV-cache occupancy, offload episodes, reoffloading, and transferred
pages per 1,000 generated response tokens. Transfers include both
device-to-host offloading and host-to-device reloading.

\subsection{Baseline Implementations}
\label{app:experimental-baselines}

\paragraph{AR.}
The AR baseline uses full-attention autoregressive decoding in vLLM with
speculative decoding disabled. It is the timing and throughput reference
for each target model.

\paragraph{N-gram Lookup.}
We use vLLM v0.12.0's prompt-lookup speculative decoding, following
\citet{saxena2023prompt}. It matches the current token suffix against
earlier occurrences in the prompt or generated response and proposes
the following tokens. We retain the default lookup configuration,
with both the minimum and maximum matching lengths set to five
tokens.\footnote{vLLM v0.12.0 speculative configuration:
\url{https://docs.vllm.ai/en/v0.12.0/api/vllm/config/speculative/}.}
Each speculative round proposes at most five draft tokens ($k=5$).
With this fixed proposal-length limit, positions after the first rejected
token still incur target-model computation during parallel verification,
although their outputs are discarded~\citep{DBLP:journals/corr/abs-2601-11580}.
Low acceptance therefore limits cost amortization: the observed block
efficiencies of $\tau=1.31$ and $1.29$ in Table~\ref{tab:main_results}
leave verification cost spread over only about 1.3 emitted tokens per
round, limiting rollout gains despite inexpensive lookup-based proposals.

\paragraph{History Retrieval.}
We implement RhymeRL~\citep{DBLP:journals/corr/abs-2508-18588} using
veRL's SpecRL recipe. A distributed suffix-tree cache stores historical
responses by prompt and is updated asynchronously after rollout
generation. Each request queries the cache with up to seven trailing
tokens of its current sequence, using a minimum retrieval token
probability of 0.1. The proposal length adapts to the number of output
tokens emitted in the preceding verification round. Retrieved tokens
are verified as deterministic proposals using rejection sampling with
residual correction.

\paragraph{Auxiliary Proposer.}
This baseline represents the learned auxiliary drafting approach used in
FastRL~\citep{DBLP:conf/asplos/0004YGY0GC0K026} and
NeMo RL~\citep{DBLP:journals/corr/abs-2604-26779}.
We use the pretrained EAGLE3~\citep{DBLP:conf/nips/LiWZZ25} checkpoints
\texttt{Qwen3-1.7B\_eagle3} and \texttt{Qwen3-8B\_eagle3} released by
AngelSlim on Hugging Face.\footnote{\url{https://huggingface.co/AngelSlim/Qwen3-1.7B_eagle3}
and \url{https://huggingface.co/AngelSlim/Qwen3-8B_eagle3}.}
Both use chain drafting with a fixed draft length of $k=5$ in the
veRL--vLLM stack. The auxiliary weights remain
frozen throughout the main experiments as the target policy is updated.
This comparison focuses on rollout acceleration without additional
drafter training. Checkpoint portability and online co-training are
examined separately in Appendix~\ref{app:checkpoint-portability}.

\subsection{Ablation Configurations}

\paragraph{Direct sparse + mismatch correction.}
This baseline in
Figure~\ref{fig:training_stability_ablation}(a) removes speculative decoding
from our sparse rollout system and applies Sparse-RL's correction methods.
It uses the main experiment's model, data, optimizer, batching, and
sparse-budget settings.
It generates responses with the sparse policy and applies the corrections
in Sections~4.2--4.3 of \citet{DBLP:conf/acl/LuoZHZWSSLZ26}:
sequence-level masking for anomalously small dense-to-sparse probability
ratios and tokenwise importance reweighting outside the clipped policy
objective. This is the correction-based sparse-rollout variant discussed
in Section~\ref{sec:preliminary-policy-mismatch} and
Appendix~\ref{app:ssrl-correction-limits}.

\paragraph{KV management and request scheduling.}
Table~\ref{tab:kv_aware_aimd_scheduler_1_7b} uses ten RL steps after warm-up
under the same training and decoding configuration as the main
Qwen3-1.7B experiment. Recompute-256 restores evicted KV states by
recomputation, whereas Offload-128, Offload-160, and Offload-256 use
host-memory offloading with fixed per-replica concurrency limits of 128,
160, and 256. The KV-aware AIMD scheduler uses the main experiment's
offloading mechanism and concurrency update rule with the parameters above. All four variants prioritize
requests with the smallest current KV-cache length for transfer to CPU,
following Appendix~\ref{app:ssrl-scheduling}. 

% Manuscript fragment: input after \appendix, with graphicx and natbib loaded.
% The standalone wrapper overrides the figure path before reading this file.
\providecommand{\modelname}{S3R}
\providecommand{\cotrainfigpath}{figs/co_training/}

\section{Checkpoint Portability and Online Drafter Adaptation}
\label{app:checkpoint-portability}

An RL workflow may start from an independently post-trained checkpoint for
which a matched auxiliary proposer is unavailable. We examine two practical
aspects of this setting: applying speculative decoding to different target
checkpoints without additional drafter training, and maintaining an auxiliary
proposer as the target policy evolves. These experiments complement the main
training results by evaluating the flexibility of sparse self-drafting.
Figure~\ref{fig:checkpoint-portability} compares checkpoint portability
and periodic online drafter adaptation.

% Three panels with one shared caption; requires graphicx and subcaption.
% Vector panels are prepared by figs/prepare_caption_panels.py.
\begin{figure}[htbp]
  \centering
  \includegraphics[width=0.65\linewidth]{\cotrainfigpath ../caption_panels/checkpoint_legend.pdf}\hfill\mbox{}
  \par\smallskip
  \newlength{\cotrainPanelHeight}
  \setlength{\cotrainPanelHeight}{0.247\linewidth}
  \begin{subfigure}[t]{0.32\linewidth}
    \centering
    \includegraphics[height=\cotrainPanelHeight]{\cotrainfigpath ../caption_panels/checkpoint_acceptance.pdf}
    \caption{Checkpoint acceptance.}\label{fig:checkpoint-acceptance}
  \end{subfigure}\hfill
  \begin{subfigure}[t]{0.32\linewidth}
    \centering
    \includegraphics[height=\cotrainPanelHeight]{\cotrainfigpath ../caption_panels/checkpoint_speedup.pdf}
    \caption{Checkpoint speedup.}\label{fig:checkpoint-speedup}
  \end{subfigure}\hfill
  \begin{subfigure}[t]{0.32\linewidth}
    \centering
    \includegraphics[height=\cotrainPanelHeight]{\cotrainfigpath ../caption_panels/online_adaptation.pdf}
    \caption{Online drafter adaptation.}\label{fig:online-drafter-adaptation}
  \end{subfigure}
  \caption{Checkpoint portability and periodic online drafter adaptation.
  (\subref{fig:checkpoint-acceptance}) Draft-token acceptance and
  (\subref{fig:checkpoint-speedup}) generation speedup relative to paired AR across target checkpoints;
  error bars summarize placement variability checked against additional runs.
  (\subref{fig:online-drafter-adaptation}) Mean acceptance length during periodic co-training,
  with faint observations and bold seven-step rolling means.}
  \label{fig:checkpoint-portability}
\end{figure}

\subsection{Checkpoint Portability without Auxiliary Proposer Training}
\label{app:checkpoint-evaluation}

\paragraph{Targets and setup.}
We evaluate five Qwen3-8B-family targets: steps 30, 60, 90 and 130 from a
DAPO--History Retrieval run; and the
released Nemotron-Cascade-8B model~\citep{nemotronCascade2025}. The latter has
undergone both supervised fine-tuning and reinforcement learning. These
targets are treated as separate categories: Nemotron do not
extend the training trajectory of steps 30--130.

All targets use the same 30 AIME24 problems, with 16 responses per problem,
temperature 0.6. This checkpoint study uses default $S^3R$ as the setup in our main experiment.
Auxiliary Proposer uses
the fixed \texttt{AngelSlim/Qwen3-8B\_eagle3} drafter. Auxiliary Proposer and N-gram Lookup
propose five draft tokens, while History Retrieval uses adaptive suffix-cache
drafting with a maximum of 16 tokens. All methods are evaluated without
additional drafter fitting or online drafter updates.
For this checkpoint study, History Retrieval's prebuilt checkpoint-0 trajectory cache
is held fixed during target evaluation.

We exclude RL optimization time and are distinct from the main paper's
end-to-end training speedups.
Draft-token acceptance measures the fraction of proposed tokens accepted.
The direction glyphs below each figure summarize each method's relative
change from DAPO step 30 to Nemotron, computed as
$(v_{\mathrm{Nemotron}}/v_{\mathrm{DAPO30}}-1)\times100\%$ for the
corresponding metric $v$. These annotations compare separate target
categories rather than a continuous training trajectory.

\paragraph{Results.}
Across the five targets, \modelname{} retains 76.32--78.78\% draft-token
acceptance and 1.698--1.799$\times$ generation speedup
(Figure~\ref{fig:checkpoint-portability}). Within the shared steps-30--90
trajectory, acceptance decreases modestly from 78.78\% to 77.76\%, while
speedup remains at 1.698--1.731$\times$. On Nemotron, \modelname{} achieves
76.74\% acceptance and 1.740$\times$ speedup without fitting a new drafter.
Fixed Auxiliary Proposer remains usable on this target: five drafts yield
1.136$\times$ speedup, whereas eight drafts yield 0.938$\times$.
Thus, direct reuse can provide reduced and configuration-sensitive
benefits. The result supports the practical value of obtaining proposals
directly from a selected supported checkpoint; it does not establish that
retraining an auxiliary proposer is necessary for every target.

\subsection{Periodic Online Drafter Adaptation}
\label{app:online-drafter-adaptation}

\paragraph{Online drafter setting and training.}
We analyze three Qwen3-8B GRPO baselines on DAPO Math in our
VERL--SpeCo implementation, using EAGLE3, DFlash~\citep{dflash2026},
and the DSpark adaptation recipe~\citep{dspark2026}. Each run uses one
node with 16 NPU workers, five rollouts per prompt, and prompt/response
limits of 512/8,192 tokens. Target policies are optimized with AdamW at
a learning rate of $10^{-6}$. Target hidden states are collected through
the old-logprob forward pass, and the auxiliary drafter is trained every
five RL steps per update. EAGLE3 uses a prompt batch size of 64, TP$=2$, and
three draft tokens; DFlash uses a batch size of 64, TP$=2$, and 15 draft
tokens; DSpark uses a batch size of 64, TP$2$, and seven draft tokens.
Thus, drafter optimization is periodic rather than accompanying every
target-policy update.
Increasing drafter optimization times will increase system overhead.

\paragraph{Observations and implications.}
Figure~\ref{fig:online-drafter-adaptation} shows the mean acceptance
length $\tau$ for each run. We use this metric
to describe acceptance changes within each run.
In these configurations, periodic adaptation does not produce a monotonic
increase in proposal acceptance. Comparing the same windows, steps 61--75
and 76--82, mean acceptance length decreases from 3.603 to 3.245 for
DSpark and from 3.761 to 2.607 for DFlash, while EAGLE3 remains
approximately stable, changing from 2.075 to 2.124. We assume that
even co-training the drafter and the target policy requires specific
adaptation to avoid drafter staleness problems.

Successful online co-training has sbeen reported in
NeMo-RL~\citep{onlineDraftCoTraining2026}, which supports EAGLE3, DFlash,
and DSpark through branch-aware context parallelism and cross-stage
target-feature transport, accelerating RL while closely tracking the baseline
learning trajectory. The present case concerns the periodic adaptation
configuration used here. Maintaining acceptance can
require coordinating auxiliary proposer adaptation with policy evolution.
\modelname{} removes the need to select and initialize a separate drafter
and schedule its optimization: its sparse proposals and dense verification
share the synchronized rollout-policy snapshot. Acceptance can still vary
with the sparse approximation and the target distribution, but this design
avoids a separate source of parameter lag and auxiliary training effort.

\section{Additional Rollout Acceleration Diagnostics}
\label{app:rollout-acceleration-diagnostics}

We supplement the main results with reward and response-length trajectories
for both model sizes, acceptance measurements across draft positions, and
the Qwen3-1.7B throughput and timing results.

\subsection{Training Dynamics across Model Sizes}
\label{app:training-dynamics}

\begin{figure}[htbp]
  \centering
  \begin{subfigure}[t]{0.49\linewidth}
    \centering
    \includegraphics[width=\linewidth,trim=12bp 12bp 44bp 10bp,clip]{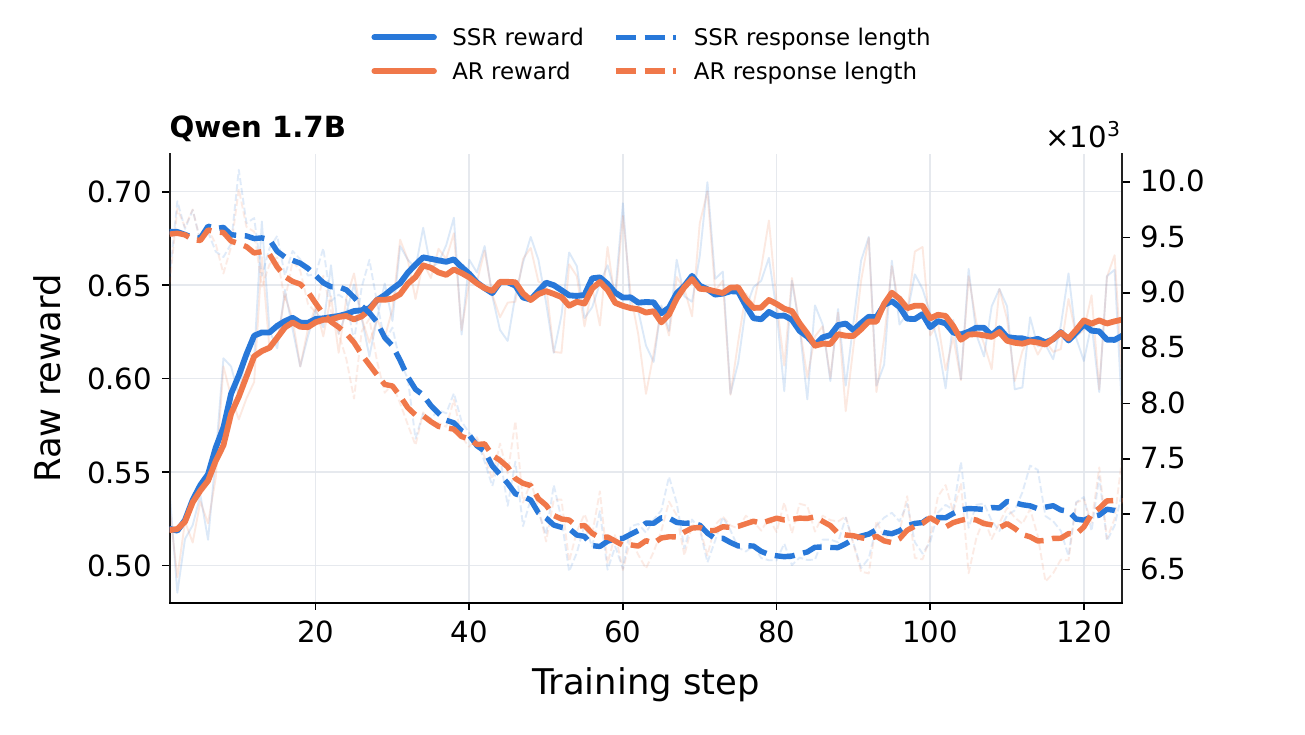}
    \caption{Qwen3-1.7B.}\label{fig:reward-response-1-7b}
  \end{subfigure}\hfill
  \begin{subfigure}[t]{0.49\linewidth}
    \centering
    \includegraphics[width=\linewidth,trim=38bp 12bp 30bp 10bp,clip]{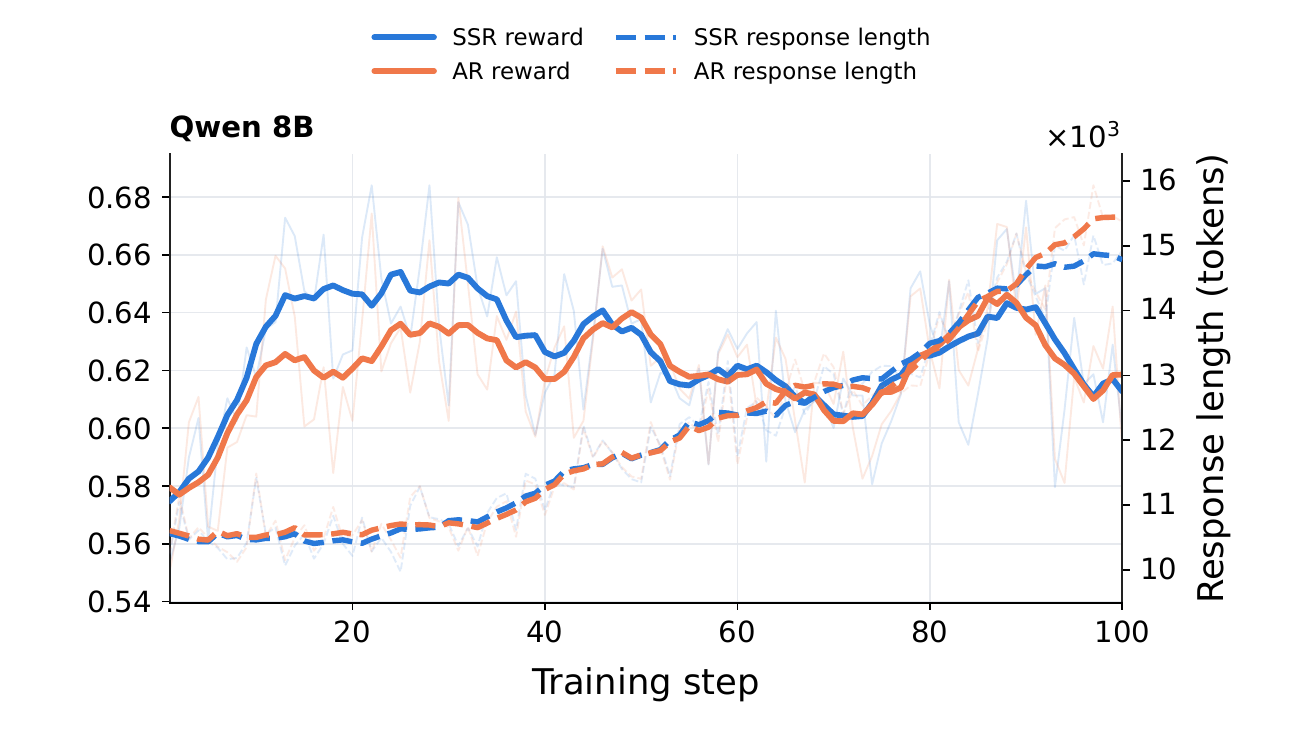}
    \caption{Qwen3-8B.}\label{fig:reward-response-8b}
  \end{subfigure}
  \caption{Reward and response-length trajectories of AR and \modelname across training steps on Qwen3-1.7B and Qwen3-8B.}
  \label{fig:reward_response_length}
\end{figure}

Figure~\ref{fig:reward_response_length} shows comparable reward and
response-length trajectories for AR and \modelname across both model sizes.
On Qwen3-1.7B, reward rises rapidly before leveling off, while responses
become shorter. Qwen3-8B exhibits the opposite length trend, with responses
growing throughout training; despite some early reward differences, the
later reward trajectories remain close.
\modelname follows AR in both cases, showing that its acceleration is
compatible with similar learning behavior under both decreasing and
increasing response lengths.

\subsection{Acceptance across Draft Positions}
\label{app:rollout-position-acceptance}

% Keep the diagnostics on text pages instead of a separate float-only page.
\begin{figure}[!ht]
  \centering
  \begin{subfigure}[t]{0.328\linewidth}
    \vspace{0pt}\centering
    \includegraphics[width=\linewidth]{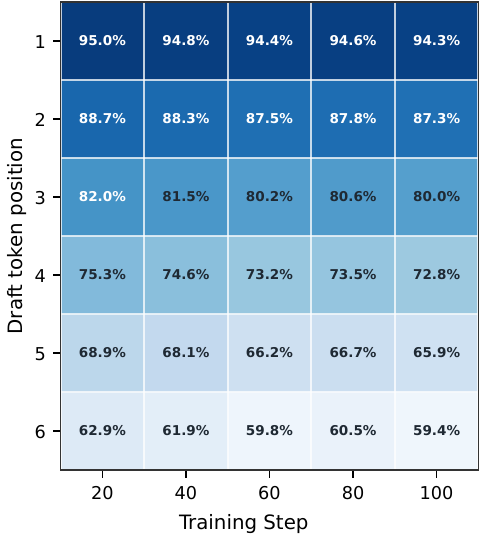}
    \caption{Qwen3-1.7B.}\label{fig:position-acceptance-1-7b}
  \end{subfigure}\hspace{0.006\linewidth}%
  \begin{subfigure}[t]{0.294\linewidth}
    \vspace{0pt}\centering
    \includegraphics[width=\linewidth]{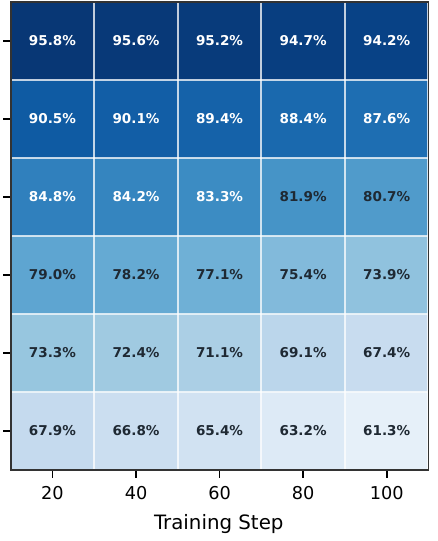}
    \caption{Qwen3-8B.}\label{fig:position-acceptance-8b}
  \end{subfigure}\hspace{0.005\linewidth}%
  \begin{minipage}[t]{0.08\linewidth}
    \vspace{0pt}\centering
    \includegraphics[width=\linewidth]{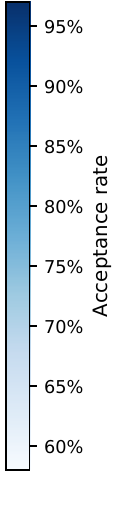}
  \end{minipage}
  \caption{\modelname draft-token acceptance across training checkpoints for
  (\subref{fig:position-acceptance-1-7b}) Qwen3-1.7B and
  (\subref{fig:position-acceptance-8b}) Qwen3-8B.
  Cells report acceptance at each draft position.}
  \label{fig:rollout_position_acceptance}
\end{figure}

Draft-token acceptance is strongly position dependent
(Figure~\ref{fig:rollout_position_acceptance}). For Qwen3-8B at step 20,
acceptance falls from $95.8\%$ for the first proposed token to $67.9\%$ for
the sixth. From step 20 to step 100, first-position acceptance decreases by
$1.6$ percentage points, whereas sixth-position acceptance decreases by
$6.6$ points, and the checkpoint-level rate falls from $81.9\%$ to $77.5\%$.
Qwen3-1.7B exhibits the same ordering: acceptance at the first and sixth
positions changes from $95.0\%$ and $62.9\%$ to $94.3\%$ and $59.4\%$,
respectively, corresponding to declines of $0.7$ and $3.5$ percentage points.
Thus, later speculative positions have both lower acceptance and larger
declines during training.
It is highly related to the entropy dynamics, showing that the RL is encouraging exploration.
This pattern motivates exploring an adaptive
draft horizon that shortens when late-position acceptance deteriorates,
which we left for future work.

\subsection{Qwen3-1.7B Throughput and Timing--Acceptance Results}
\label{app:rollout-acceleration-1-7b}

% Keep the figure below its subsection heading and beside its discussion.
\begin{figure}[H]
  \centering
  \begin{subfigure}[t]{0.4\linewidth}
    \centering
    \includegraphics[width=\linewidth]{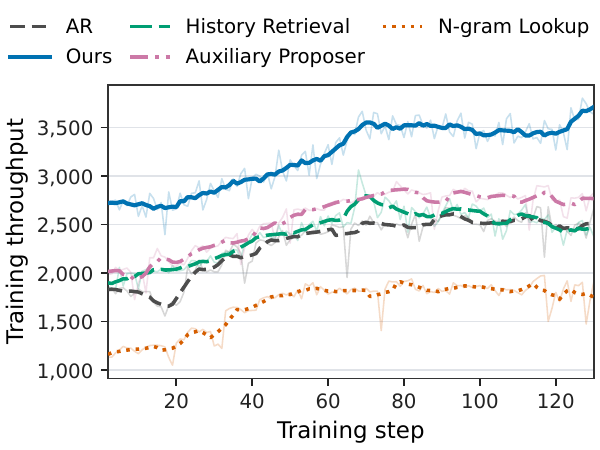}
    \caption{Training throughput.}\label{fig:rollout_throughput_1_7b}
  \end{subfigure}\hfill
  \begin{subfigure}[t]{0.4\linewidth}
    \centering
    \includegraphics[width=\linewidth]{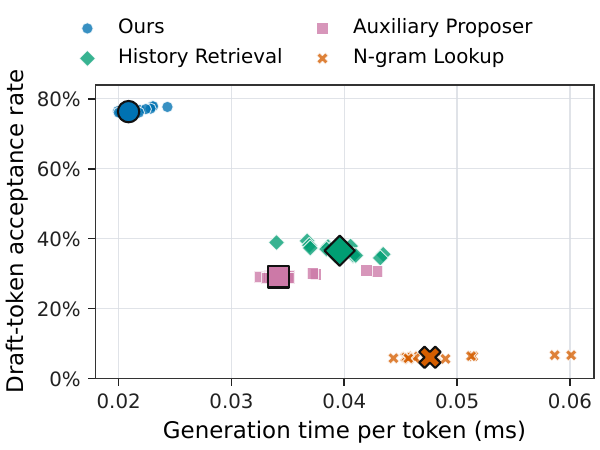}
    \caption{Timing--acceptance trade-off.}\label{fig:rollout_timing_acceptance_1_7b}
  \end{subfigure}
  \caption{Qwen3-1.7B rollout diagnostics.
  (\subref{fig:rollout_throughput_1_7b}) End-to-end training throughput;
  (\subref{fig:rollout_timing_acceptance_1_7b}) generation time per processed token
  versus draft-token acceptance; outlined markers denote method medians.}
  \label{fig:rollout_acceleration_1_7b}
\end{figure}

\paragraph{Throughput.}
End-to-end throughput counts processed prompt and response tokens over the
complete training-step time and GPU count; speedups are ratios of mean
per-step throughput. Figure~\ref{fig:rollout_throughput_1_7b}
shows the throughput trajectory during training. The mean throughput in
Table~\ref{tab:main_results} is $3{,}213.8$ tokens/s/GPU for \modelname and
$2{,}308.6$ for AR, a $1.392\times$ ratio. Auxiliary Proposer, History
Retrieval, and N-gram Lookup reach $1.110\times$, $1.045\times$, and
$0.724\times$ the AR throughput, respectively. 

\paragraph{Acceptance and generation cost.}
In Figure~\ref{fig:rollout_timing_acceptance_1_7b}, \modelname has median draft-token
acceptance of $76.3\%$ at $0.0209$ ms per processed token, compared with
$29.1\%$ at $0.0341$ ms for Auxiliary Proposer and $36.6\%$ at $0.0396$ ms
for History Retrieval. Its median generation-time metric is $38.9\%$ lower
than Auxiliary Proposer's and $47.3\%$ lower than History Retrieval's.
Although History Retrieval has higher acceptance than Auxiliary Proposer,
it also incurs greater time per processed token. This agrees with the 8B
comparison: acceptance must be considered together with proposal,
verification, and runtime costs.

% Systems rationale and measurements for sparse self-speculative RL rollouts.
% This file is self-contained; the previous measurement narrative is retained
% in sparse_self_speculation_measurements.tex for the .bak.tex reference version.

% Keep prose and equations compact when the next figure starts on a new page.
\raggedbottom

\Needspace{8\baselineskip}
\section{Why Self-Sparse Speculative Decoding for RL Rollouts?}
\label{app:why-ssrl}

Sparse self-drafting is useful when the attention work it removes outweighs
its retained projection work and dense verification cost. This appendix
supports Section~\ref{sec:efficiency-runtime} by connecting that cost balance
to measured rollout workloads. We first derive the latency condition, then
trace the evidence from attention savings to realized throughput, and use a
simple speedup surface to explain the budget--acceptance trade-off.

\subsection{Latency decomposition for self-sparse speculative decoding}
\label{app:why-ssrl-cost}

We extend the serialized latency accounting in
Eq.~\ref{eq:efficiency-sd-accounting} by separating projection work from
historical KV reads at locally fixed active batch size $B$ and mean cached
context length $S$. For complete rounds, let $\alpha$ be the mean fraction
of the $k$ draft tokens accepted; then $\tau=1+k\alpha\in[1,k+1]$.

Let $T_{\mathrm{GEMM}}(B)$ cover QKVO, FFN, and LM-head work, and let
$T_{\mathrm{Attn}}(M)$ cover attention at aggregate KV volume
$M\propto B\cdot S$. The shared-weight drafter retains projection work but
reads a fraction $s\in(0,1]$ of the historical KV.
For bandwidth-bound attention at fixed effective bandwidth, the Roofline
model~\citep{williams2009roofline} motivates approximating latency as linear
in $M$. Verification processes
the $k$ drafts plus the preceding round's final token, whose dense KV is
not yet cached. Assuming $k+1\ll S$ and shared historical reads across
these $(k+1)B$ queries gives
\begin{equation}
\begin{aligned}
 T_p&=T_{\mathrm{GEMM}}(B)+T_{\mathrm{Attn}}(M),\\
 T_q&=T_{\mathrm{GEMM}}(B)+sT_{\mathrm{Attn}}(M),\\
 T_V&=T_{\mathrm{GEMM}}((k+1)B)+T_{\mathrm{Attn}}(M).
\end{aligned}
\label{eq:why-ssrl-sparse-stages}
\end{equation}
Projection latency need not scale linearly with $B$ because small batches
reuse weights. Larger batches can increase $T_V/T_p$, whereas longer
contexts at fixed $B$ push it toward one
\citep{sadhukhan2025magicdec,DBLP:journals/corr/abs-2606-18967}.

Shared weights also permit draft and verification queries in one forward.
With requests evenly staggered across $k$ draft stages and one verification
stage~\citep[Sections~3.2--3.3]{DBLP:journals/corr/abs-2512-01278},
each iteration has $kB/(k+1)$ single-query drafts and $B/(k+1)$
verifications of $k+1$ queries each. This yields $(2k+1)B/(k+1)$ queries
and $(ks+1)M/(k+1)$ effective KV reads per iteration. Each request
completes a round in $k+1$ iterations, so the mixed-stage cost per output
token is
\begin{equation}
 \frac{T_{\mathrm{mix}}}{\tau}
 =\frac{k+1}{\tau}\left[
 T_{\mathrm{GEMM}}\!\left(\frac{2k+1}{k+1}B\right)
 +T_{\mathrm{Attn}}\!\left(\frac{ks+1}{k+1}M\right)\right].
 \label{eq:why-ssrl-mixed-latency}
\end{equation}
By linearity, the attention term becomes $(ks+1)T_{\mathrm{Attn}}(M)/\tau$.

At full acceptance, $\tau=k+1$ maximizes output per round for fixed
$(B,S,s,k)$. Comparing Eq.~\ref{eq:why-ssrl-mixed-latency} with the AR
latency $T_p$ gives the ideal mixed-stage feasibility condition
\begin{equation}
 k(1-s)T_{\mathrm{Attn}}(M)
 >(k+1)\left[
 T_{\mathrm{GEMM}}\!\left(\frac{2k+1}{k+1}B\right)
 -T_{\mathrm{GEMM}}(B)\right].
 \label{eq:why-ssrl-full-acceptance}
\end{equation}
The left-hand side is the attention saving; the right-hand side is the
projection overhead from mixing draft and verification queries. If the
condition fails, no acceptance rate yields a speedup in this component
model. Observed acceleration also depends on actual acceptance and costs
beyond this approximation, examined in
Appendix~\ref{app:why-ssrl-calibration}.

\subsection{From attention savings to rollout speedup}
\label{app:why-ssrl-profiles}
\label{app:why-ssrl-setup}

The measurements below follow the cost balance in three steps: identify
where attention dominates, account for the drafting and verification costs,
and test whether the resulting savings improve output throughput.

\paragraph{Experimental setting.}
Table~\ref{tab:why-ssrl-setup} summarizes the component-profiling setup.
Here $B$ is the active batch on one TP replica, $S$ is context length per
request, and $N_{\mathrm{KV}}=B S$ is the aggregate live KV volume, with
$1\mathrm{K}=1024$ tokens. The subsequent throughput sweep uses real
mathematical continuations and stochastic sampling, as specified below.

\begin{table}[h]
\centering
\small
\caption{Settings for the component measurements in Tables~\ref{tab:why-ssrl-e1-breakdown}--\ref{tab:why-ssrl-e2-stages} and Figure~\ref{fig:why-ssrl-e1-ratio}.}
\label{tab:why-ssrl-setup}
\begin{tabular}{ll}
\hline
Item & Setting \\
\hline
Model & Qwen3-8B, DAPO step-60 checkpoint \\
Hardware & $2\times$NVIDIA H100 80GB HBM3 \\
Parallelism & one rollout-engine replica, TP${=}2$ \\
Precision & BF16 \\
Workload & equal-length synthetic prompts, fixed concurrency, ignore EOS \\
Sampling & temperature 0 (deterministic systems workload) \\
AR baseline & ordinary AR with full attention \\
Sparse draft & Pillar attention, budget ratio 0.05, minimum 192 tokens \\
Speculation & draft length $k=6$, exact full-model verification \\
\hline
\end{tabular}
\end{table}

\subsubsection{Larger live KV volumes make attention the bottleneck}
\label{app:why-ssrl-e1}

At small live KV volumes, sparse attention targets only a modest share of
decoding cost. The first four rows of Table~\ref{tab:why-ssrl-e1-breakdown}
use the rollout-tail coordinates studied by
EfficientRollout~\citep{DBLP:journals/corr/abs-2606-18967}.
Attention accounts for 18.8--22.4\% of the measured kernel time, while
FFN, QKVO, and LM-head projections together account for 62.1--65.0\%.
As context grows at fixed $B=16$, attention rises from 2.34 ms at $S=4$K
to 12.93 ms at $S=32$K, while FFN stays near 2.24 ms.
This transition explains why \modelname targets historical KV reads.
EfficientRollout's quantized self-drafter reduces weight traffic in
projection-dominated tails; sparse self-drafting addresses the growing
attention cost at larger live KV volumes. The two motivations are consistent
with different workload regimes, so the relevant drafting strategy depends
on which cost dominates.

\begin{table}[h]
\centering
\small
\setlength{\tabcolsep}{5.5pt}
\caption{Full-attention AR decoding costs: ms per iteration (share of GPU kernel time),
excluding TP collectives. The upper rows cover small-KV rollout-tail workloads;
the lower rows show the transition to attention-dominated decoding.
QKVO denotes query, key, value, and output projections.}
\label{tab:why-ssrl-e1-breakdown}
\begin{tabular}{@{}rrrrrrrr@{}}
\hline
$N_{\mathrm{KV}}$ & $S$ & $B$ & Attention & FFN & QKVO & LM head & Other \\
\hline
8K   & 2K  & 4   & 0.96 (19.4\%)  & 2.18 (43.8\%) & 0.83 (16.6\%) & 0.21 (4.1\%) & 0.80 (16.1\%) \\
8K   & 4K  & 2   & 0.93 (18.8\%)  & 2.18 (44.2\%) & 0.82 (16.6\%) & 0.20 (4.2\%) & 0.80 (16.2\%) \\
8K   & 8K  & 1   & 0.98 (19.8\%)  & 2.17 (43.7\%) & 0.82 (16.5\%) & 0.20 (4.1\%) & 0.79 (16.0\%) \\
16K  & 2K  & 8   & 1.16 (22.4\%)  & 2.19 (42.1\%) & 0.83 (16.0\%) & 0.21 (4.0\%) & 0.81 (15.6\%) \\
\hline
64K  & 4K  & 16  & 2.34 (36.2\%)  & 2.24 (34.6\%) & 0.85 (13.2\%) & 0.21 (3.3\%) & 0.82 (12.7\%) \\
128K & 8K  & 16  & 3.85 (48.4\%)  & 2.23 (28.1\%) & 0.85 (10.7\%) & 0.21 (2.6\%) & 0.82 (10.3\%) \\
256K & 16K & 16  & 6.85 (62.4\%)  & 2.24 (20.4\%) & 0.85 (7.8\%)  & 0.21 (1.9\%) & 0.82 (7.5\%) \\
512K & 32K & 16  & 12.93 (75.8\%) & 2.24 (13.2\%) & 0.85 (5.0\%)  & 0.21 (1.2\%) & 0.82 (4.8\%) \\
512K & 16K & 32  & 12.98 (75.1\%) & 2.37 (13.7\%) & 0.87 (5.1\%)  & 0.21 (1.2\%) & 0.84 (4.9\%) \\
512K & 8K  & 64  & 13.11 (76.2\%) & 2.23 (13.0\%) & 0.76 (4.4\%)  & 0.21 (1.2\%) & 0.89 (5.2\%) \\
512K & 4K  & 128 & 13.07 (75.2\%) & 2.32 (13.4\%) & 0.80 (4.6\%)  & 0.22 (1.3\%) & 0.97 (5.6\%) \\
512K & 2K  & 256 & 13.44 (72.3\%) & 2.73 (14.7\%) & 0.99 (5.4\%)  & 0.24 (1.3\%) & 1.18 (6.3\%) \\
\hline
\end{tabular}
\end{table}

Figure~\ref{fig:why-ssrl-e1-ratio} extends this comparison across feasible
batch sizes $B\in\{1,4,16,32,64,128,256\}$ and context lengths
$S\in\{1,2,4,8,16,24,32\}\mathrm{K}$.
Attention-to-FFN latency ratios are similar across different factorizations
of the same $N_{\mathrm{KV}}$: about 0.43--0.45 at 8K, 0.97--1.12 at 64K,
and 4.92--5.88 at 512K. Thus, in this profiled range, aggregate live KV
volume is a useful first-order indicator of the attention bottleneck.
With eight responses per prompt in our RLVR setup
(Section~\ref{sec:setup}), concurrency can make attention expensive even at
moderate context lengths. Conversely, a
shrinking active batch can reduce this opportunity despite long remaining
responses. These profiles motivate tracking both batch and context when
deciding where sparse speculation is useful within an RL rollout, 
which is left for future work.

\begin{figure}[h]
\centering
\includegraphics[width=0.68\linewidth]{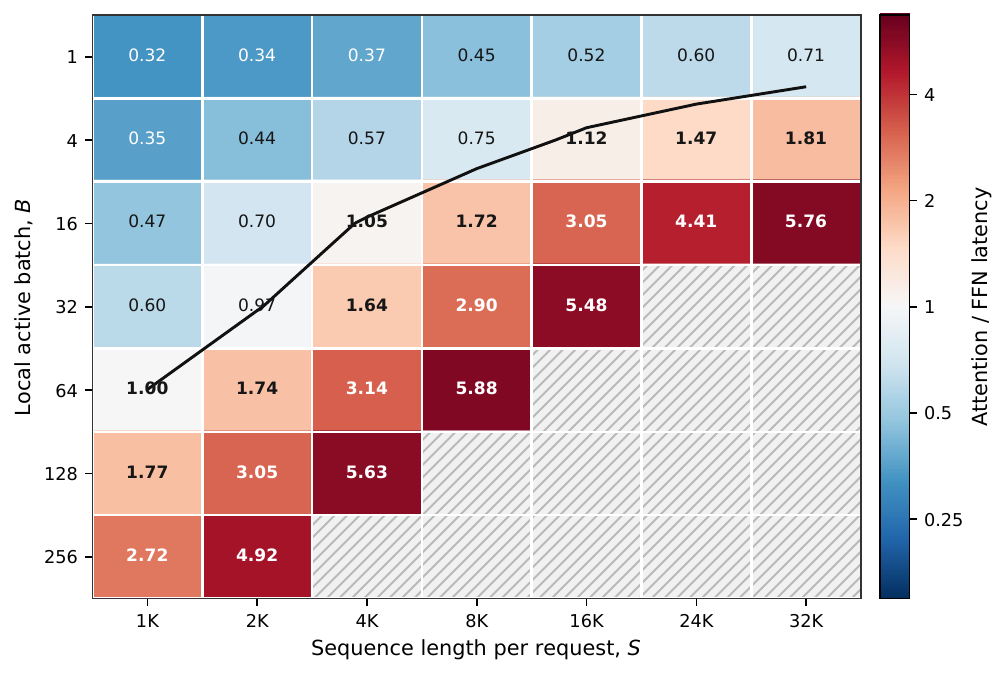}
\caption{Attention-to-FFN kernel latency ratio for full-attention AR decoding
across batch sizes and context lengths.
The black contour marks equal latency; hatched cells exceed the measured KV-capacity limit.}
\label{fig:why-ssrl-e1-ratio}
\end{figure}

\subsubsection{Attention savings must cover drafting and verification}
\label{app:why-ssrl-e2}

Table~\ref{tab:why-ssrl-e2-attention} measures attention time for sparse
drafting plus dense verification at eight feasible $(B,S)$ settings.
With a draft budget of $s=0.05$, the reduction grows from 38.3\% at
$N_{\mathrm{KV}}=64$K to 63.2--63.7\% at 512K.
The savings persist with dense verification included, supporting
\modelname's decision to confine sparsity to proposals while paying for
faithful target sampling. They reduce attention work without removing the
complete KV state required for verification, which explains the need for
the KV-aware runtime in Section~\ref{sec:runtime-support}.
Whether these component savings translate into throughput depends on
the retained projection work and accepted output.

\begin{table}[h]
\centering
\small
\setlength{\tabcolsep}{6pt}
\caption{Attention time in AR and \modelname, in ms per scheduler iteration per TP rank
(median over three runs). \modelname includes sparse drafting and dense verification,
averaged over the full window, including idle iterations. These component savings
must still be weighed against accepted output and other execution costs.}
\label{tab:why-ssrl-e2-attention}
\begin{tabular}{@{}rrrrrr@{}}
\hline
$B$ & $S$ & $N_{\mathrm{KV}}$ & Full attention & \modelname attention & Reduction \\
\hline
16 & 4K  & 64K  & 2.31 & 1.42 & 38.3\% \\
16 & 8K  & 128K & 3.83 & 1.89 & 50.6\% \\
16 & 16K & 256K & 6.82 & 2.79 & 59.2\% \\
32 & 4K  & 128K & 3.80 & 1.96 & 48.5\% \\
32 & 8K  & 256K & 6.83 & 2.85 & 58.3\% \\
32 & 16K & 512K & 12.91 & 4.68 & 63.7\% \\
64 & 4K  & 256K & 6.91 & 2.90 & 58.1\% \\
64 & 8K  & 512K & 13.02 & 4.79 & 63.2\% \\
\hline
\end{tabular}
\end{table}

\Needspace{8\baselineskip}
Table~\ref{tab:why-ssrl-e2-stages} adds these stage costs.
Under the sequential accounting in Eq.~\ref{eq:efficiency-sd-accounting},
a round costs $6T_q+T_V$ and produces $1+6\alpha$ output tokens per request.
Equating this cost to $(1+6\alpha)T_p$ gives the break-even acceptance
\begin{equation}
 \alpha_{\mathrm{BE}}=\frac{1}{6}\left(\frac{6T_q+T_V}{T_p}-1\right).
 \label{eq:why-ssrl-stage-threshold}
\end{equation}
At 64K--128K live KV tokens, the measured stage costs exceed the AR
reference even at full acceptance. At 256K, the threshold falls to
75.8--77.0\%, and at 512K to 45.5--46.7\%.
The low-KV cases expose a limit of sharing policy weights: it avoids
separate drafter updates, but retains projection work at every draft step.
Where full acceptance is insufficient under this accounting, better
proposal agreement alone cannot deliver acceleration. A future runtime
could reduce stage costs or fall back to AR in such states, reserving
sparse speculation for favorable workloads. These thresholds use
separately timed sequential stages; complete-window timing below measures
the realized benefit.

\begin{table}[h]
\centering
\small
\setlength{\tabcolsep}{6pt}
\caption{Drafting and verification costs for $k=6$ (ms, medians over three runs).
The AR reference $7T_p$ generates as many tokens as a fully accepted round.
$\alpha_{\mathrm{BE}}$ is the acceptance needed to break even under sequential
stage accounting. AR and drafting use CUDA Graphs; verification runs eagerly.
The summed costs exclude idle iterations and cross-request phase mixing.}
\label{tab:why-ssrl-e2-stages}
\begin{tabular}{@{}rrrrrrrr@{}}
\hline
$B$ & $S$ & $N_{\mathrm{KV}}$ & AR ($7T_p$) & Draft ($6T_q$) & Verify ($T_V$) & Total & $\alpha_{\mathrm{BE}}$ \\
\hline
16 & 4K  & 64K  & 54.42 & 40.06 & 28.06 & 68.12 & $>100\%$ \\
16 & 8K  & 128K & 65.55 & 39.15 & 27.97 & 67.12 & $>100\%$ \\
16 & 16K & 256K & 88.51 & 42.38 & 28.10 & 70.48 & 76.2\% \\
32 & 4K  & 128K & 66.70 & 40.89 & 29.03 & 69.92 & $>100\%$ \\
32 & 8K  & 256K & 88.12 & 43.01 & 26.86 & 69.87 & 75.8\% \\
32 & 16K & 512K & 133.18 & 43.73 & 28.56 & 72.29 & 46.7\% \\
64 & 4K  & 256K & 87.22 & 42.05 & 27.96 & 70.01 & 77.0\% \\
64 & 8K  & 512K & 129.85 & 42.51 & 26.68 & 69.19 & 45.5\% \\
\hline
\end{tabular}
\end{table}

\subsubsection{The sparse budget trades accepted progress against execution cost}
\label{app:why-ssrl-budget}

To test this trade-off directly, we sweep the configured draft budget over
3\%, 5\%, 10\%, 20\%, 35\%, and 50\% for both Qwen3-1.7B and Qwen3-8B.
We use frozen DAPO step-60 checkpoints, two H100 GPUs with TP$=2$, BF16,
$k=6$, and a minimum draft budget of 192 tokens. Each run starts with
$B=64$ requests at $S=8$K, using three fixed bundles of mathematical
continuations sampled at temperature one. AR and \modelname use the same
engine and inputs. Timing covers complete draft--verify rounds at fixed
concurrency, with EOS disabled and no KV offloading. Speedup is the ratio
of committed output tokens per second to the matched AR reference.
This isolates the budget comparison from policy updates and request admission.

Figure~\ref{fig:why-ssrl-budget-overlay} shows why acceptance alone cannot
select the budget. Increasing the budget from 5\% to 50\% raises
representative acceptance from 79.5\% to 95.8\% for 1.7B and from 80.4\%
to 96.7\% for 8B, yet median speedup falls from $1.533\times$ to
$1.094\times$ and from $1.370\times$ to $1.058\times$, respectively.
Reducing the budget to 3\% also lowers speedup, to $1.469\times$ and
$1.307\times$. The 5--10\% region provides a favorable balance in these
workloads: further savings at smaller budgets lose useful progress, while
larger budgets add more cost than their improved acceptance recovers.
The sweep supports our default of 5\%, without identifying a unique optimum.

Under Proposition~\ref{prop:ssrl-exactness}, this throughput-driven budget
choice preserves dense response groups. This is the practical value of
\modelname's separation of proposal and behavior policies, given the
instability of direct sparse rollout at the same 5\% budget
(Section~\ref{sec:stability}). It motivates adapting the budget to accepted
output per unit time as the policy and workload evolve.

\begin{figure}[h]
 \centering
 \begin{subfigure}[t]{0.48\linewidth}
  \centering
  \includegraphics[width=\linewidth]{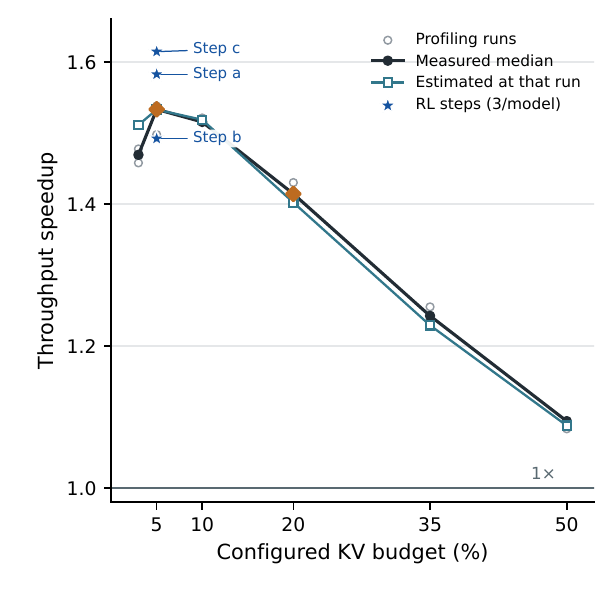}
  \caption{Qwen3-1.7B.}\label{fig:why-ssrl-budget-overlay-1-7b}
 \end{subfigure}\hfill
 \begin{subfigure}[t]{0.48\linewidth}
  \centering
  \includegraphics[width=\linewidth]{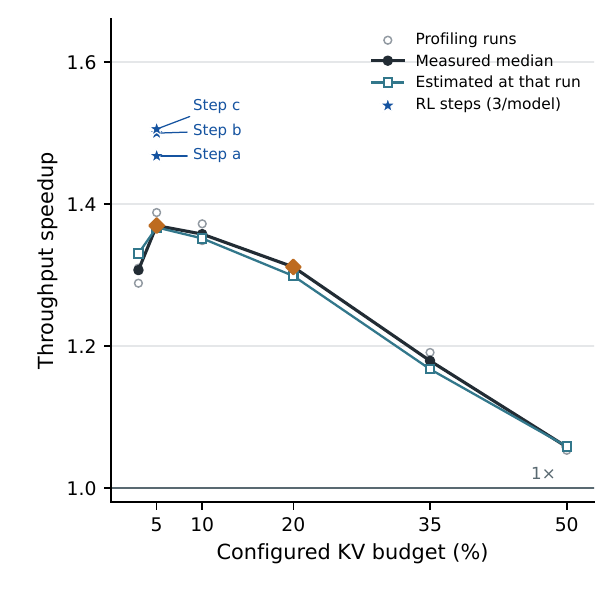}
  \caption{Qwen3-8B.}\label{fig:why-ssrl-budget-overlay-8b}
 \end{subfigure}
 \caption{Sparse-budget sweeps for
 (\subref{fig:why-ssrl-budget-overlay-1-7b}) Qwen3-1.7B and
 (\subref{fig:why-ssrl-budget-overlay-8b}) Qwen3-8B at $B=64$, initial $S=8$K, and $k=6$.
 Black circles show median measured speedup; squares show estimates from
 Eq.~\ref{eq:why-ssrl-calibrated-speedup} at the same runs' measured acceptance.
 Orange diamonds mark budgets excluded from fitting. Blue stars show separate
 RL observations, discussed in Appendix~\ref{app:why-ssrl-calibration}.}
 \label{fig:why-ssrl-budget-overlay}
\end{figure}

\Needspace{24\baselineskip}
\subsection{Interpreting the budget--acceptance speedup surface}
\label{app:why-ssrl-calibration}

Figure~\ref{fig:why-ssrl-surfaces} separates the budget sweep's two effects:
drafting cost and accepted output. At fixed $B$, $S$, and $k$, write
$B_{\mathrm{mix}}=\frac{2k+1}{k+1}B$ for the query batch in a mixed-stage
forward. Multiplying Eq.~\ref{eq:why-ssrl-mixed-latency} by $\tau$ and using
the linear-attention approximation
$T_{\mathrm{Attn}}(\lambda M)\simeq\lambda T_{\mathrm{Attn}}(M)$ gives
\[
 T_{\mathrm{mix}}\simeq
 (k+1)T_{\mathrm{GEMM}}(B_{\mathrm{mix}})
 +(1+ks)T_{\mathrm{Attn}}(M).
\]
Only the draft attention term depends on $s$ in this approximation.
We normalize by the AR step cost in the same component model,
$T_p=T_{\mathrm{GEMM}}(B)+T_{\mathrm{Attn}}(M)$, and add a dimensionless
constant $\delta$ fitted to account for execution and approximation errors.
Collecting the constant and budget-dependent terms yields
\[
 \frac{T_{\mathrm{mix}}}{T_p}+\delta\simeq
 \underbrace{\frac{(k+1)T_{\mathrm{GEMM}}(B_{\mathrm{mix}})
 +T_{\mathrm{Attn}}(M)}{T_p}+\delta}_{c_0}
 +s\underbrace{\frac{kT_{\mathrm{Attn}}(M)}{T_p}}_{c_1}.
\]
Since each speculative round produces $\tau=1+k\alpha$ tokens per request,
speedup is this output count divided by normalized round cost. Thus the
calibrated estimate is
\begin{equation}
 \widehat\eta(\alpha,s)=\frac{1+k\alpha}{c_0+c_1s},\qquad k=6.
 \label{eq:why-ssrl-calibrated-speedup}
\end{equation}
Here $c_0$ includes projection work, dense verification attention, and the
fitted correction; $c_1s$ is the draft attention contribution.
The ratio $s$ is the effective
fraction of historical KV tokens read by the drafter, accounting for the
minimum budget, rounding, and newly appended tokens.
% This is the original calibrated model in compact form:
% c_0 = (k+1) g_k (1-rho) + rho + delta; c_1 = k rho.
% The plotted surfaces and their fitted values have not been changed.

We profile component costs independently and fit only $\delta$ on the
budget sweep, holding out the 5\% and 20\% budgets for
validation. Acceptance enters as a measured input. Evaluating the expression
over $(s,\alpha)$ gives the background in Figure~\ref{fig:why-ssrl-surfaces},
with measurements overlaid. Figure~\ref{fig:why-ssrl-budget-overlay} applies
the same model at each median-speedup run's acceptance and effective budget.

\begin{figure}[h]
 \centering
 \newlength{\ssrlSurfacePanelWidth}
 \setlength{\ssrlSurfacePanelWidth}{0.43\linewidth}
 \begin{subfigure}[t]{\ssrlSurfacePanelWidth}
  \vspace{0pt}\centering
  \includegraphics[width=\linewidth]{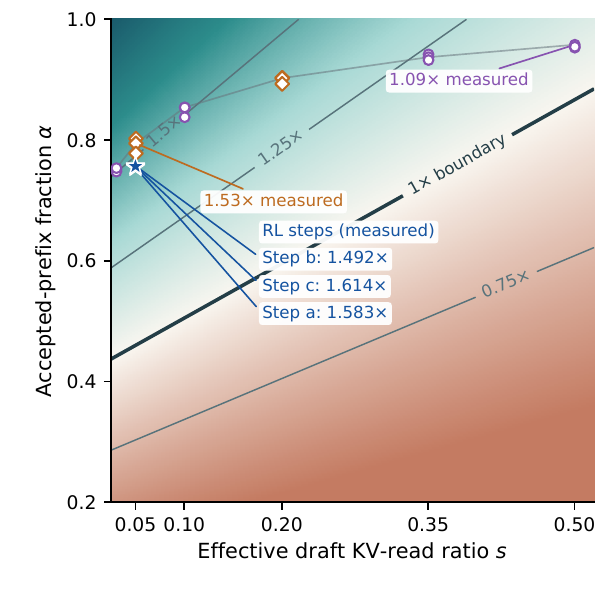}
  \caption{Qwen3-1.7B.}\label{fig:why-ssrl-surface-1-7b}
 \end{subfigure}\hfill
 \begin{subfigure}[t]{\ssrlSurfacePanelWidth}
  \vspace{0pt}\centering
  \includegraphics[width=\linewidth]{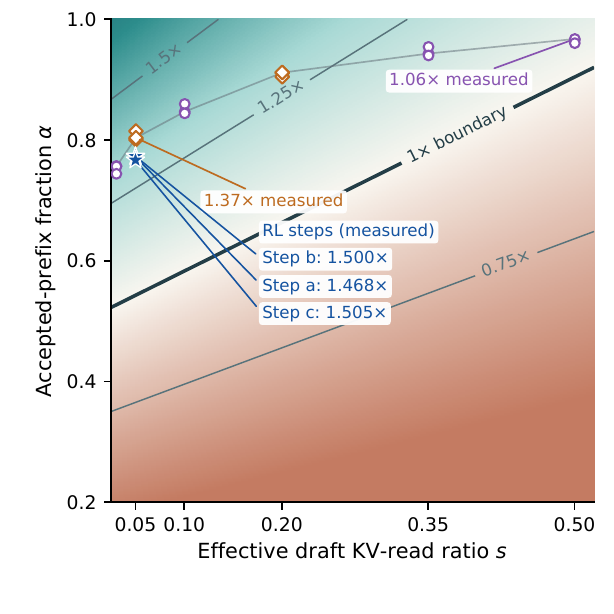}
  \caption{Qwen3-8B.}\label{fig:why-ssrl-surface-8b}
 \end{subfigure}\hfill
 \begin{minipage}[t]{0.10\linewidth}
  \vspace{0pt}\centering
  \includegraphics[height=0.83\ssrlSurfacePanelWidth]{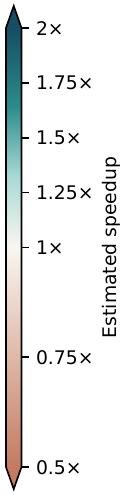}
 \end{minipage}
 \par\medskip
 \includegraphics[width=0.90\linewidth]{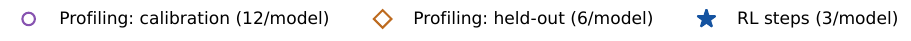}
 \caption{Estimated speedup over sparse budget and acceptance for
 (\subref{fig:why-ssrl-surface-1-7b}) Qwen3-1.7B and
 (\subref{fig:why-ssrl-surface-8b}) Qwen3-8B at $B=64$, $S\approx8$K, and $k=6$.
 The background evaluates Eq.~\ref{eq:why-ssrl-calibrated-speedup}; the dark
 contour marks $1\times$ speedup. Circles and diamonds show calibration and
 held-out observations. Blue stars locate separate RL observations at $s=0.05$;
 their annotations report measured speedups.}
 \label{fig:why-ssrl-surfaces}
\end{figure}

\Needspace{10\baselineskip}
\paragraph{Reading the surface.}
Moving upward at fixed budget increases accepted output and speedup;
moving rightward at fixed acceptance increases cost. Along the measured
trajectory, larger budgets improve acceptance, but their added cost
eventually dominates. The dark contour marks where useful progress just
covers round cost. All measured sweep points exceed $1\times$, so the
break-even boundary and low-acceptance region remain model extrapolations.
For \modelname's reuse of verification attention scores, the surface
suggests a concrete objective: improve accepted progress at the same
KV-read budget while keeping selection overhead small. Increasing context
coverage alone need not move the system toward higher throughput.

\paragraph{Calibration and execution cost.}
The constant correction reduces held-out mean absolute percentage error
from 84.61\% to 0.80\% for 1.7B and from 63.03\% to 0.68\% for 8B.
This gap motivates measuring and optimizing verification, sparse selection,
and scheduling alongside KV reads. It includes error in the component
approximation, so the correction cannot identify these contributions
individually. The small residual errors apply only to the profiled
execution settings.

\paragraph{Connection to RL rollouts.}
\label{app:why-ssrl-rl}
Blue stars show three RL steps per model at $s=0.05$. Pooling output counts
and generation times across all four paired 1.7B steps and three 8B steps
gives throughput speedups of $1.558\times$ and $1.491\times$, at acceptance
75.7\% and 77.1\%. These runs exclude optimization time and use changing
batches, CPU KV offloading, and a vLLM AR baseline, so local fitting accuracy
does not establish prediction accuracy for RL training. The complementary
gain from KV-aware scheduling in Section~\ref{sec:kv-scheduling-ablation}
points to a further target: coordinate prompt sharing and prefetching with
complete-group generation, so KV transfers do not consume the attention
savings.

\end{document}

% --- supplement: figs/co_training/appendix_standalone.tex ---

\appendix
% Manuscript fragment: input after \appendix, with graphicx and natbib loaded.
% The standalone wrapper overrides the figure path before reading this file.
\providecommand{\modelname}{S3R}
\providecommand{\cotrainfigpath}{figs/co_training/}

\section{Checkpoint Portability and Online Drafter Adaptation}
\label{app:checkpoint-portability}

An RL workflow may start from an independently post-trained checkpoint for
which a matched auxiliary proposer is unavailable. We examine two practical
aspects of this setting: applying speculative decoding to different target
checkpoints without additional drafter training, and maintaining an auxiliary
proposer as the target policy evolves. These experiments complement the main
training results by evaluating the flexibility of sparse self-drafting.
Figure~\ref{fig:checkpoint-portability} compares checkpoint portability
and periodic online drafter adaptation.

% Three panels with one shared caption; requires graphicx and subcaption.
% Vector panels are prepared by figs/prepare_caption_panels.py.
\begin{figure}[htbp]
  \centering
  \includegraphics[width=0.65\linewidth]{\cotrainfigpath ../caption_panels/checkpoint_legend.pdf}\hfill\mbox{}
  \par\smallskip
  \newlength{\cotrainPanelHeight}
  \setlength{\cotrainPanelHeight}{0.247\linewidth}
  \begin{subfigure}[t]{0.32\linewidth}
    \centering
    \includegraphics[height=\cotrainPanelHeight]{\cotrainfigpath ../caption_panels/checkpoint_acceptance.pdf}
    \caption{Checkpoint acceptance.}\label{fig:checkpoint-acceptance}
  \end{subfigure}\hfill
  \begin{subfigure}[t]{0.32\linewidth}
    \centering
    \includegraphics[height=\cotrainPanelHeight]{\cotrainfigpath ../caption_panels/checkpoint_speedup.pdf}
    \caption{Checkpoint speedup.}\label{fig:checkpoint-speedup}
  \end{subfigure}\hfill
  \begin{subfigure}[t]{0.32\linewidth}
    \centering
    \includegraphics[height=\cotrainPanelHeight]{\cotrainfigpath ../caption_panels/online_adaptation.pdf}
    \caption{Online drafter adaptation.}\label{fig:online-drafter-adaptation}
  \end{subfigure}
  \caption{Checkpoint portability and periodic online drafter adaptation.
  (\subref{fig:checkpoint-acceptance}) Draft-token acceptance and
  (\subref{fig:checkpoint-speedup}) generation speedup relative to paired AR across target checkpoints;
  error bars summarize placement variability checked against additional runs.
  (\subref{fig:online-drafter-adaptation}) Mean acceptance length during periodic co-training,
  with faint observations and bold seven-step rolling means.}
  \label{fig:checkpoint-portability}
\end{figure}

\subsection{Checkpoint Portability without Auxiliary Proposer Training}
\label{app:checkpoint-evaluation}

\paragraph{Targets and protocol.}
We evaluate five Qwen3-8B-family targets: steps 30, 60, and 90 from a
DAPO--History Retrieval run; step 130 from a separate DAPO--\modelname{} run; and the
released Nemotron-Cascade-8B model~\citep{nemotronCascade2025}. The latter has
undergone both supervised fine-tuning and reinforcement learning. These
targets are treated as separate categories: step 130 and Nemotron do not
extend the training trajectory of steps 30--90.

All targets use the same 30 AIME24 problems, with 16 responses per problem,
temperature 0.6, engine seed 330, tensor parallelism of two, and a saturated
queue of 480 requests. Prompts use each target's tokenizer and chat template.
The DAPO checkpoints use a 40,960-token context limit and at most 32,768
generated tokens; Nemotron uses a 32,768-token context limit and
at most 32,336 generated tokens. This checkpoint study uses an eight-token
self-drafting stride for \modelname{} (labeled Ours in the figure), a
sparse budget ratio of 0.05, and a minimum budget of 192 tokens. Auxiliary Proposer uses
the fixed \texttt{AngelSlim/Qwen3-8B\_eagle3} drafter. Auxiliary Proposer and N-gram Lookup
propose five or eight draft tokens, while History Retrieval uses adaptive suffix-cache
drafting with a maximum of 16 tokens. All methods are evaluated without
additional drafter fitting or online drafter updates.
For this checkpoint study, History Retrieval's prebuilt checkpoint-0 trajectory cache
is held fixed during target evaluation.

AR, Auxiliary Proposer, and N-gram Lookup use vLLM 0.12.0. History Retrieval uses vLLM 0.10.0
with a separate AR control. We report the median throughput ratio to the
paired AR control from the same placement and runtime comparison group.
These are full-system generation speedups, including runtime differences;
they exclude RL optimization time and are distinct from the main paper's
end-to-end training speedups.
Draft-token acceptance measures the fraction of proposed tokens accepted;
its denominator depends on the draft configuration, so acceptance and
speedup should be interpreted together.
The direction glyphs below each panel summarize each method's relative
change from DAPO step 30 to Nemotron, computed as
$(v_{\mathrm{Nemotron}}/v_{\mathrm{DAPO30}}-1)\times100\%$ for the
corresponding metric $v$. These annotations compare separate target
categories rather than a continuous training trajectory; they do not
propagate uncertainty from the source measurements.

\paragraph{Placement variability.}
Speedup error bars summarize placement variability and were checked against
additional experimental observations. They are not confidence intervals
and do not represent variation across independent training seeds.
% The authors confirmed the follow-up variability check on 2026-09-17.

\paragraph{Results.}
Across the five targets, \modelname{} retains 76.32--78.78\% draft-token
acceptance and 1.698--1.799$\times$ generation speedup
(Figure~\ref{fig:checkpoint-portability}). Within the shared steps-30--90
trajectory, acceptance decreases modestly from 78.78\% to 77.76\%, while
speedup remains at 1.698--1.731$\times$. On Nemotron, \modelname{} achieves
76.74\% acceptance and 1.740$\times$ speedup without fitting a new drafter.
Fixed Auxiliary Proposer remains usable on this target: five drafts yield
1.136$\times$ speedup, whereas eight drafts yield 0.938$\times$.
Thus, direct reuse can provide reduced and configuration-sensitive
benefits. The result supports the practical value of obtaining proposals
directly from a selected supported checkpoint; it does not establish that
retraining an auxiliary proposer is necessary for every target.

\subsection{Periodic Online Drafter Adaptation}
\label{app:online-drafter-adaptation}

\paragraph{Online drafter setting and training.}
We analyze three historical Qwen3-8B GRPO runs on DAPO Math in our
VERL--SpeCo implementation, using EAGLE3, DFlash~\citep{dflash2026},
and the DSpark adaptation recipe~\citep{dspark2026}. Each run uses one
node with 16 NPU workers, five rollouts per prompt, and prompt/response
limits of 512/8,192 tokens. Target policies are optimized with AdamW at
a learning rate of $10^{-6}$. Target hidden states are collected through
the old-logprob forward pass, and the auxiliary drafter is trained every
five RL steps for 20 optimizer steps per update, followed by synchronous
weight publication. EAGLE3 uses a prompt batch size of 16, TP$=2$, and
three draft tokens; DFlash uses a batch size of 64, TP$=2$, and 15 draft
tokens; DSpark uses a batch size of 64, TP$=4$, and seven draft tokens.
EAGLE3 enables an actor KL penalty, while the other two runs do not.
Thus, drafter optimization is periodic rather than accompanying every
target-policy update.

\paragraph{Acceptance metric and visualization.}
Figure~\ref{fig:online-drafter-adaptation} shows the logged mean acceptance
length (\texttt{mean\_acceptance\_length}) for each run. We use this metric
to describe acceptance changes within each run; it is distinct from the
draft-token acceptance fraction used in the checkpoint experiment.
Centered seven-step rolling means are computed using the available
observations at each boundary. The curves retain their exported ranges:
DSpark through step 82, DFlash through step 96, and EAGLE3 through
step 104; these endpoints indicate only the extent of the available
records. The training configuration specifies the periodic drafter-update
schedule, and the local records identify drafter update and publication
events. However, these events are not isolated interventions: target-policy
evolution and rollout samples also change over training. We therefore do
not interpret acceptance changes around individual update events as their
causal effects.

\paragraph{Observations and implications.}
In these configurations, periodic adaptation does not produce a monotonic
increase in proposal acceptance. Comparing the same windows, steps 61--75
and 76--82, mean acceptance length decreases from 3.603 to 3.245 for
DSpark and from 3.761 to 2.607 for DFlash, while EAGLE3 remains
approximately stable, changing from 2.075 to 2.124. These windows summarize
the recorded curves rather than identify a causal change point. The
single-run observations describe within-run proposal-acceptance changes;
they neither establish instability of target-policy optimization nor
constitute a controlled ranking of the three methods. In particular,
although drafter update and publication events are recorded locally, the
available runs do not isolate their effects from concurrent target-policy
evolution and changing rollout samples.

Successful online co-training has been reported in
NeMo-RL~\citep{onlineDraftCoTraining2026}, which supports EAGLE3, DFlash,
and DSpark through branch-aware context parallelism and cross-stage
target-feature transport, accelerating RL while closely tracking the baseline
learning trajectory. The present case concerns the periodic adaptation
configuration used here. Maintaining acceptance can
require coordinating auxiliary proposer adaptation with policy evolution.
\modelname{} removes the need to select and initialize a separate drafter
and schedule its optimization: its sparse proposals and dense verification
share the synchronized rollout-policy snapshot. Acceptance can still vary
with the sparse approximation and the target distribution, but this design
avoids a separate source of parameter lag and auxiliary training effort.

\bibliographystyle{plainnat}
\bibliography{references}